\newcolumntype{L}[1]{>{\raggedright\arraybackslash}p{#1}}
\newcolumntype{C}[1]{>{\centering\arraybackslash}p{#1}}
\newcolumntype{R}[1]{>{\raggedleft\arraybackslash}p{#1}}
\newtheorem{proposition}{Proposition}
\newtheorem{theorem}{Theorem}
\newtheorem{corollary}{Corollary}
\newtheorem{definition}{Definition}
\newcommand{\setcover}{\textsc{Set Cover}\xspace}
\newcommand{\exactsetcover}{\textsc{Exact Set Cover}\xspace}
\newcommand{\redblue}{\textsc{Red-Blue Set Cover}\xspace}
\newcommand{\eval}[2]{{Eval}$({#1},#2)$\xspace}
\newcommand{\error}[3]{\mbox{\textsc{Error}}_{\text{#1}}({#2},{#3})}
\newcommand{\psc}{$\pm$PSC\xspace}
\newcommand{\ruleselectgeneral}[1]{\textsc{Rule-Select}$_{\text{#1}}$\xspace}
\newcommand{\ruleselect}[1]{{\textsc{Rule-Select}}$_{\text{#1}}$\textup{(a,r)}\xspace}
\newcommand{\exactruleselectgeneral}[1]{\textsc{Exact Rule-Select}$_{\text{#1}}$\xspace}
\newcommand{\exactruleselect}[1]{\textsc{Exact Rule-Select}$_{\text{#1}}$\textup{(a,r)}\xspace}
\newcommand{\optruleselectgeneral}[1]{\textsc{Min Rule-Select}$_{\text{#1}}$\xspace}
\newcommand{\optruleselect}[1]{\textsc{Min Rule-Select}$_{\text{#1}}$\textup{(a,r)}\xspace}
\newcommand{\pscproblem}{\textsc{Positive-Negative Partial Set Cover\xspace}}
\newcommand{\paretooptimality}[1]{\textsc{Pareto Opt Solution Rule-Select}$_{\text{#1}}$\textup{(a,r)}\xspace}
\newcommand{\paretofrontmembership}[1]{\textsc{Pareto Front Membership Rule-Select$_{\text{#1}}$}\textup{(a,r)}\xspace}
\newcommand{\setcoveroptimality}{\textsc{Set-Cover Optimality}\xspace}
\newcommand{\bileveloptimalsolution}[1]{\textsc{Bi-level Opt Solution Rule-Select}$_{\text{#1}}$\textup{(a,r)}\xspace}
\newcommand{\bileveloptimalvalue}[1]{\textsc{Bi-level Opt Value Rule-Select}$_{\text{#1}}$\textup{(a,r)}\xspace}
\newcommand{\eat}[1]{{}}
\newcommand{\SortNoop}[1]{}
\newcommand{\squishlist}{
\begin{list}{$\bullet$}
{
\setlength{\itemsep}{0pt}
\setlength{\parsep}{3pt}
\setlength{\topsep}{3pt}
\setlength{\partopsep}{0pt}
\setlength{\leftmargin}{1.5em}
\setlength{\labelwidth}{1em}
\setlength{\labelsep}{0.5em} } }
\newcommand{\squishlisttwo}{
\begin{list}{$\bullet$}
{
\setlength{\itemsep}{0pt}
\setlength{\parsep}{0pt}
\setlength{\topsep}{0pt}
\setlength{\partopsep}{0pt}
\setlength{\leftmargin}{2em}
\setlength{\labelwidth}{1.5em}
\setlength{\labelsep}{0.5em} } }
\newcommand{\squishend}{
\end{list}  }
\begin{document}
%
\title{Knowledge Refinement via Rule Selection}
\begin{scriptsize}
\author{Phokion G. Kolaitis\textsuperscript{1,2} ~~Lucian Popa\textsuperscript{2} ~~Kun Qian\textsuperscript{2}\\
\textsuperscript{1} UC Santa Cruz ~~ \textsuperscript{2} IBM Research -- Almaden \\
kolaitis@ucsc.edu, lpopa@us.ibm.com, qian.kun@ibm.com
}
\end{scriptsize}

\maketitle
\begin{abstract}
\looseness=-1 In several different applications, including data transformation and entity resolution, rules are used to capture aspects of knowledge about
the application at hand. Often, a large set of such rules is generated automatically or semi-automatically, and the challenge is to
refine the encapsulated knowledge by selecting a subset of rules based on the expected operational behavior of the rules on
available data.
In this paper, we carry out a systematic complexity-theoretic investigation of the following rule selection problem:  given a set of rules
specified by Horn formulas, and a pair of an input database and an output database, find a subset of the rules that
minimizes the total error, that is, the number of false positive and false negative errors arising from the selected rules. We first establish
computational hardness results for the decision problems underlying this minimization problem, as well as upper and lower bounds for its
approximability. We then investigate a bi-objective optimization version of the rule selection problem in which both the total error and the size of the
selected rules are taken into account.  We show that testing for membership in the Pareto front of this bi-objective optimization problem
is DP-complete. Finally, we show that a similar DP-completeness result  holds for a bi-level optimization version of the rule selection problem, where
one minimizes first the total error and then the size.
\end{abstract}


\looseness = -1 Rules, typically expressed as Horn formulas, are ubiquitous in several different areas of computer science and artificial intelligence. For example, rules are the basic construct of (function-free) logic programs. In  data integration \cite{DBLP:conf/pods/Lenzerini02} and data exchange \cite{FAGIN200589},
  rules are known as GAV (global-as-view) constraints and are used to specify data transformations between a local (or source) schema and a global (or target) schema.
  In data mining, rules  have many uses, including the specification of   contextual preferences   \cite{DBLP:conf/sigmod/AgrawalRT06,DBLP:journals/is/AmoDDGLS15}. In entity resolution, rules have been used to specify   blocking functions \cite{DBLP:conf/icdm/BilenkoKM06} and  entity resolution algorithms \cite{Qian2017}.

 Often, a large set of  rules is generated automatically or semi-automatically, and the challenge is to
refine the encapsulated knowledge by selecting a subset of rules based on the expected operational behavior of the rules on
available data. Rule selection  arises naturally in all aforementioned contexts and, in fact, in most contexts involving reasoning about data.
Here, we present an example
motivated by a real-life application in which we are building a knowledge base of experts in the medical domain, based on public data, and where entity resolution is one of the crucial first steps.

In entity resolution, the aim is to identify references of the same real-world entity across multiple records or datasets.
Consider the scenario depicted in
Figure 1, where the aim is to identify occurrences of the same author across
research publications from {\tt \small PubMed}\footnote{https://www.ncbi.nlm.nih.gov/pubmed}.
\begin{figure}[ht]
\hspace*{-.2cm}
\centering
\includegraphics[width=8.8cm]{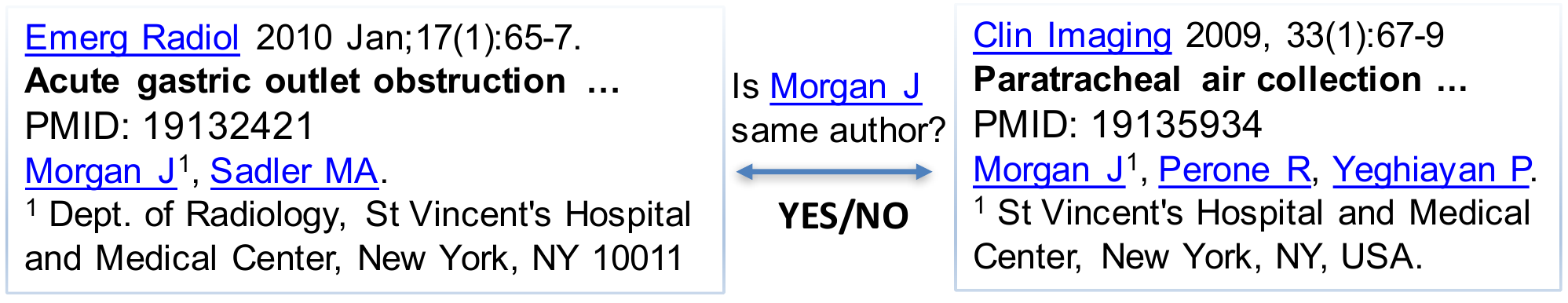}
\label{example}
\vspace*{-.5cm}
\caption{\small An entity resolution task}
\vspace*{-.2cm}
\end{figure}
\begin{figure}[t]
\hspace*{-.3cm}
\centering
\includegraphics[width=9.5cm]{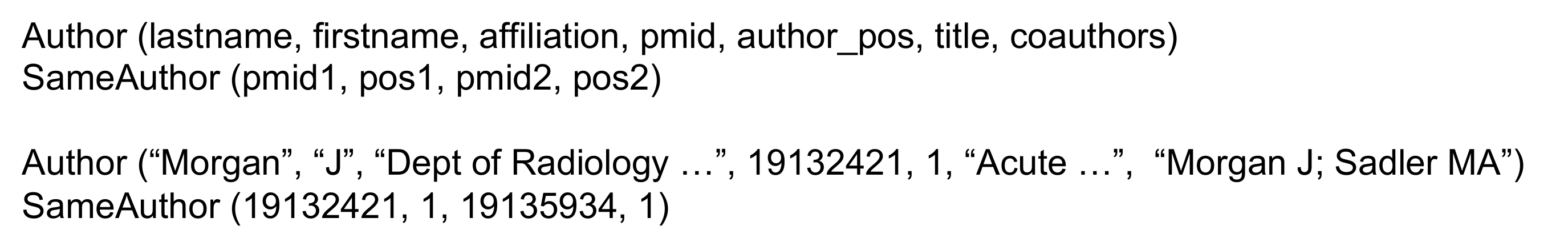}
\label{schemas}
\vspace*{-.5cm}
\caption{\small Schemas and facts}
\vspace*{-.3cm}
\end{figure}
%
%
This entity resolution task can be modeled using a source schema that includes a relation {\tt \small Author} and a link schema that consists of a relation {\tt \small SameAuthor}.  Sample facts (records) over the source and the link relations are given in
Figure 2.
As in the frameworks of Markov Logic Networks~\cite{SinglaDomingos06}, Dedupalog~\cite{Dedupalog09}, and declarative entity linking~\cite{BurdickFKPT16},
explicit link relations are used to represent entity resolution inferences. In particular,
the {\tt \small SameAuthor} fact in Figure 2 represents that an inference was made to establish that the author in position {\tt \small 1} of
publication with {\tt \small pmid 19132421} is the same as the author in position {\tt \small 1} of publication with {\tt \small pmid 19135934}.

\begin{figure}[t]
\hspace*{-.3cm}
\centering
\includegraphics[width=9.2cm]{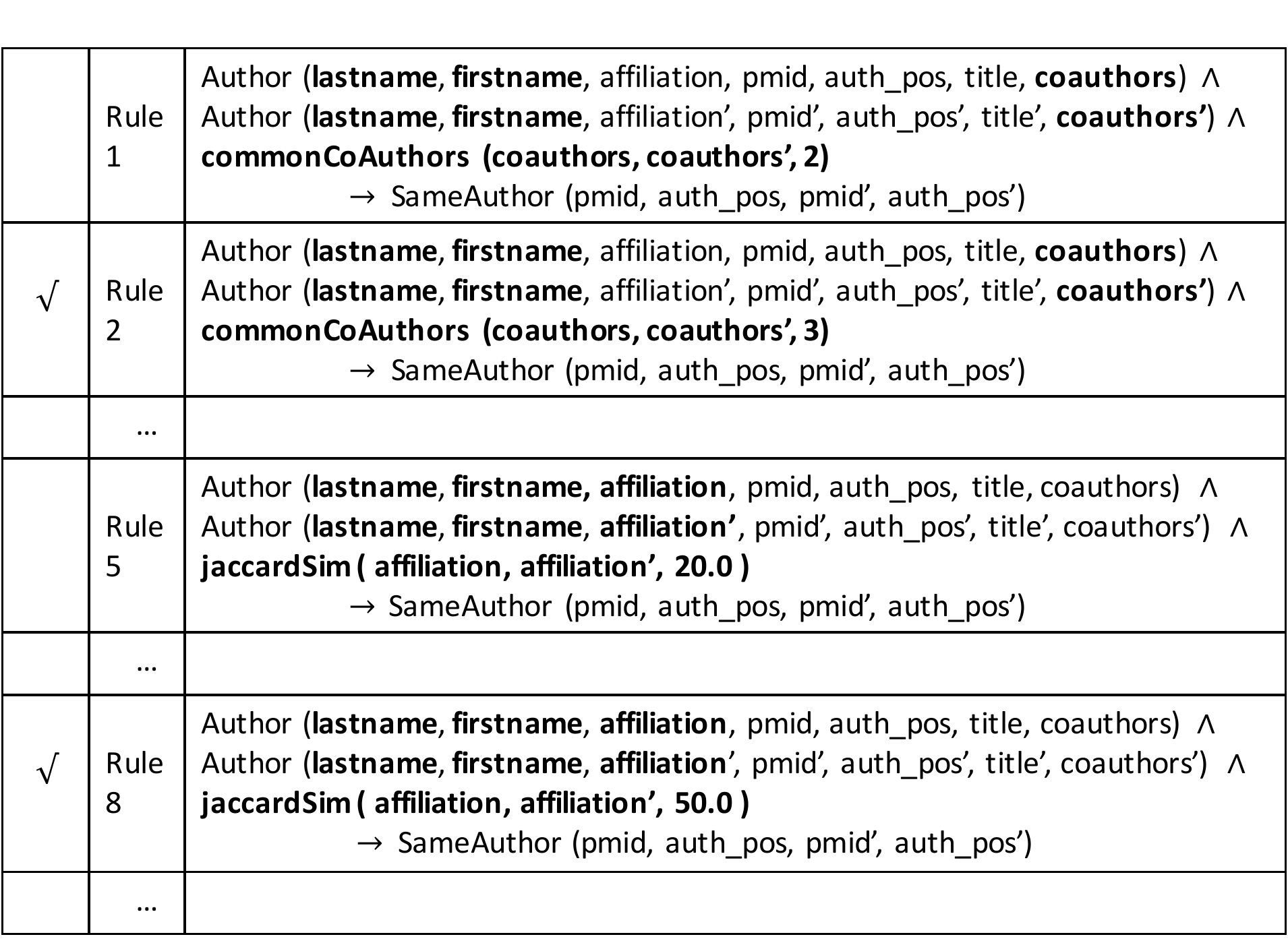}
\label{rules}
\vspace*{-.5cm}
\caption{\small Candidate rules for an entity resolution task}
\vspace*{-.3cm}
\end{figure}

For a given entity resolution task, there is typically a large set of candidate rules that may apply on the input data to form matches among the entities.
For our concrete scenario,
Figure 3
gives a sample of candidate matching rules.
These rules involve the alignment of relevant attributes
(e.g., {\tt \small lastname} with {\tt \small lastname}, {\tt \small affiliation} with {\tt \small affiliation}) and the subsequent application of  similarity predicates, filters, and thresholds.
The challenge is to find a subset of the candidate rules  with the ``right" combinations of predicates
and thresholds that will lead to high precision and recall with respect to a given set of ground truth data. As an example, both {\tt \small Rule~1} and {\tt \small Rule~2} in Figure 3
generate a {\tt \small SameAuthor} link between two author occurrences on two different
publications, provided that the last names and first names are identical and provided that there is a sufficient number of common coauthors on the two publications. However, {\tt \small Rule~1}, which checks for at least two common coauthors, may turn out to be too imprecise (i.e., may yield too many false positives), while {\tt \small Rule~2}, which checks for at least three common coauthors, may result into fewer errors.
 Different rules may use different predicates in their premises. For example, {\tt \small Rules~5-8} exploit the Jaccard similarity of affiliation,
  but have different similarity thresholds.
Only one of them ({\tt \small Rule~8}, with similarity threshold of 50\%) may achieve high enough precision.

 Thus,
 the problem becomes how to select a set of rules that achieve high precision (i.e., minimize the number of false positives) and high recall (i.e.,  minimize the number of false negatives) with  respect to a given set of ground truth data; furthermore, one would also like to select a compact (in terms of size) such set of rules.

\looseness = -1 Similar rule selection problems have been studied in several different contexts.  In data exchange,
\cite{Kimmig17} have investigated the \emph{mapping selection} problem:  given a set $\mathcal C$ of rules expressing data transformations between a source and a target schema, and a pair $(I,J)$ of a source database $I$ and a target database $J$, find a subset $\mathcal C'$ of the rules that minimizes the sum of the false positive errors, the false negative errors, and the sizes of the rules in $\mathcal C'$.  \cite{DBLP:conf/icdt/SarmaPGW10} have investigated the \emph{view selection} problem: given a materialized view $V$, a database $D$, and a collection $\mathcal S$ of sets of rules on $D$, find a set of rules in $\mathcal S$ that is as ``close" to the view $V$ and as compact as possible.
In data mining, \cite{DBLP:conf/sigmod/AgrawalRT06,DBLP:conf/sdm/DavisST09} investigated the problems of selecting contextual preference rules and association rules; these problems can be cast as variants of the rule selection problem considered here.

\looseness = -1 \paragraph{Summary of Results}
We formalize  the rule selection problem   with rules
 specified by Horn formulas of first-order logic;  the relation symbols in the premises of the Horn formulas come from a \emph{premise} schema, while those in the conclusions come from a \emph{conclusion} schema that is disjoint from the premise schema. This formalization captures rule selection problems in a variety of contexts.

\eat{
\looseness = -1 This formalization is flexible enough to capture rules that have been used in several different contexts. For example, in  data integration \cite{DBLP:conf/pods/Lenzerini02} and data exchange \cite{FAGIN200589}, rules are known as GAV (Global-as-View) constraints; furthermore, the premise schema is typically called the \emph{local} or \emph{source} schema, while the conclusion schema is typically called the \emph{global} or the \emph{target} schema. In entity resolution, rules have been used to specify   blocking functions \cite{DBLP:conf/icdm/BilenkoKM06} and  entity resolution algorithms \cite{Qian2017}.
 In data mining, rules can simulate the contextual preference rules studied in \cite{DBLP:conf/sigmod/AgrawalRT06,DBLP:journals/is/AmoDDGLS15}. }

\looseness = -1 An input to the rule selection problem consists of a finite set $\mathcal C$ of  rules and a pair $(I,J)$ of a premise database $I$
and a conclusion database $J$
that represents ground truth. When a subset $\mathcal C'$ of $\mathcal C$ is evaluated on the premise instance $I$, it produces a conclusion instance $\mbox{\eval{\mathcal C'}{I}}$. The set $\mbox{\eval{\mathcal C'}{I}} \setminus J$ is the set of the false positive errors, while the set $J\setminus \mbox{\eval{\mathcal C'}{I}}$ is the set of false negative errors. We  study the optimization problem \optruleselectgeneral{FP+FN} in which, given $\mathcal C$, $(I,J)$ as above,  the goal is to find a subset $\mathcal C'$ of $\mathcal C$ so that the number of false positive and false negative errors is minimized. We also study the optimization problem \optruleselectgeneral{FP} in which the goal is  to find a subset $\mathcal C'$ of $\mathcal C$ so that the number of false positive  errors is minimized and there are no false negative errors (this is meaningful when $J\subseteq \mbox{\eval{\mathcal C}{I}}$).

\looseness = -1 To gauge the difficulty of these two optimization problems, we first examine their underlying decision problems. We show that the decision problems involving a bound on the error are NP-hard. We also show that the \emph{exact} decision problems asking if the error is equal to a given value are DP-hard; in particular, they
 are both NP-hard and coNP-hard (thus, unlikely to be in $\textup{NP} \cup \textup{coNP}$).
In view of these hardness results, we focus on the approximation properties of the two rule selection optimization problems. We show that, in a precise sense, \optruleselectgeneral{FP} has the same approximation properties as
 the \redblue~problem, while \optruleselectgeneral{FP+FN} has the same approximation properties as the \pscproblem~problem. These results yield both polynomial-time approximation algorithms and lower bounds for the approximability of our problems.

\looseness = -1  The preceding results focus on the minimization of the error produced by the selected rules. What if one  wants to also take the size of the selected rules into account? Since error and size are qualitatively incomparable quantities, it is not meaningful to add them or even to take a linear combination of the two. Instead, we consider pairs of values $(e,s)$ of error and size that are \emph{Pareto optimal}, that is, neither of these values can be decreased without increasing the other value at the same time. The \emph{Pareto front} of an instance is the set of all Pareto optimal pairs. Even though the study of Pareto optimality has been a central theme of multi-objective optimization for decades, it appears that no such study has been carried out for rule selection problems in any of the contexts discussed earlier. Here, we initiate such a study and show that the following problem is DP-hard: given a set $\cal C$ of rules, a pair $(I,J)$ of a premise database and a conclusion database, and a pair $(e,s)$ of integers, does $(e,s)$ belong to the Pareto front of \optruleselectgeneral{FP+FN}? We also show that a similar DP-hardness result holds for
\optruleselectgeneral{FP}.

Finally, we investigate a bi-level optimization version of \optruleselectgeneral{FP+FN}, where
one minimizes first the total error and then the size. We show that the following problem is DP-hard:
given a set $\cal C$ of rules, a pair $(I,J)$ of a premise database and a conclusion database, and a pair $(e,s)$ of integers,
is $e$ the minimum possible error and is $s$ the minimum size of subsets of rules having the minimum error? We also show a
similar DP-hardness result holds for the bi-level optimization version of \optruleselectgeneral{FP}.

The main results of this paper are summarized in Table \ref{tbl:summary}, which can be found in a subsequent section.

\section{Related Work}
 We already mentioned that \cite{Kimmig17}  studied the \emph{mapping selection} problem in the context of data exchange. In addition to considering rules specified by Horn formulas (GAV constraints in data exchange), they also considered richer rules in which the conclusion involves existential quantification over a conjunction of atoms (GLAV - global and local as view - constraints in data exchange). They established that the mapping selection problem is NP-hard even for GAV constraints, but did not explore approximation algorithms for this optimization problem; instead, they designed an algorithm that  uses  probabilistic soft logic \cite{DBLP:journals/jmlr/BachBHG17} to solve a relaxation of the mapping selection problem and then carried out a detailed experimental evaluation of this approach.
Its many technical merits notwithstanding, the  work of \cite{Kimmig17} suffers from a serious drawback, namely, the objective function of the mapping selection problem is defined to be the sum of the size of the rules and the error (the number of the false positives and the false negatives). As stated earlier, however,  size  and  error are qualitatively different quantities, thus it is simply not meaningful to add them, the same way it is not meaningful to add dollars and miles if one is interested in a hotel room near the White House and is trying to minimize the cost of the room and the distance from the White House. This is why, to avoid this pitfall here, we first focus  on error minimization alone and then study the Pareto optimality of pairs of size and error values.

\looseness=-1 In the rule selection problem, the aim is to select a set of rules from a larger set of candidate rules based on some given data. There is a large body of work on the problem of deriving  a set of rules in data exchange and data integration from just  one or more given \emph{data examples}. There are several different approaches to this problem, including casting it as an optimization problem \cite{DBLP:journals/jacm/GottlobS10,DBLP:journals/tods/CateKQT17},
as a ``fitting" problem \cite{DBLP:journals/tods/AlexeCKT11,DBLP:conf/sigmod/AlexeCKT11},  as an interactive derivation problem \cite{Bonifati:2017:IMS:3035918.3064028},
or as a learning problem \cite{DBLP:journals/tods/CateDK13,DBLP:conf/pods/CateK0T18}. Clearly, this is a related but rather different problem because, in contrast to the rule selection problem, no candidate rules are part of the input.

\section{Basic Concepts and Algorithmic Problems} \label{sec:basic}

\noindent \textbf{Schemas and Instances}
A \emph{schema} \textbf{R} is a set $\{R_1,\ldots,R_k\}$ of relation symbols, each with a specified \emph{arity} indicating the number of its arguments.  An \emph{$\textbf{R}$-instance} $I$ is a set $\{R_1^I,\ldots,R^I_k\}$ of relations whose arities match those of the corresponding relation symbols.
An \emph{$\textbf{R}$-instance}  can be identified with the
set of all \emph{facts} $R_i(a_1,\ldots,a_m)$, such that $R_i$ is
a relation symbol in \textbf{R} and $(a_1,\ldots,a_m)$ is a tuple in
the relation $R^I_i$ of $I$ interpreting the relation symbol $R_i$.

\smallskip
\noindent \textbf{Rules}. Let \textbf{S} and \textbf{T} be two disjoint relational schemas.
In the rest of the paper, we will  refer to \textbf{S} as the {\em premise} schema and to  \textbf{T} as the {\em conclusion} schema. A \emph{rule} over \textbf{S} and \textbf{T} is a  Horn  formula of first-order logic of the form
\[
\forall\textbf{x}~ (\psi(\textbf{x}) \rightarrow P(\textbf{x})),
\]
where the premise $\psi(\textbf{x})$ is a conjunction of atoms over \textbf{S} and the conclusion $T(\textbf{x})$ is a single atom over \textbf{T} with variables among those in \textbf{x}.
 For example, the rule \[\forall x, y, z (E(x,z)\land E(z,y)\rightarrow F(x,y))\] asserts that $F$ contains all pairs of nodes connected via an $E$-path of length $2$.
 For simplicity, we will be dropping the universal quantifiers $\forall$, so that the preceding rule about paths of length $2$ will be written as  $(E(x,z)\land E(z,y)\rightarrow F(x,y))$.

\looseness = -1 The atoms in the premises  may contain constants or they may be built-in predicates, such as {\small {\bf jaccardSim}}. However,  none of the lower-bound complexity results established here uses such atoms, while the upper-bound complexity results hold true even in the presence of such atoms, provided the built-in predicates are polynomial-time computable.

   The {\em size} of a rule $\rho$, denoted by $|\rho|$,  is the number of atoms in the premise of $\rho$.  The \emph{size} of a collection $\mathcal{C}$ of rules, denoted by $|\mathcal{C}|$, is the sum of the sizes of the rules in $\mathcal{C}$.

\smallskip
\noindent \textbf{Data example.} A {\em data example} is a pair $(I,J)$, where $I$ is an instance over the premise schema \textbf{S} and $J$ is an instance over the conclusion schema \textbf{T}.

\smallskip
\noindent \textbf{Rule evaluation.}
Given a rule $\rho$ and an instance $I$,  we write \eval{\rho}{I} to denote the  result of evaluating the premise of $\rho$ on $I$ and then populating the conclusion of $\rho$ accordingly. For example, if $\rho$ is the rule $(E(x,z)\land E(z,y)\rightarrow F(x,y))$ and $E$ is a graph, then \eval{\rho}{E} is the set $F$ consisting of all pairs of nodes of $E$ connected via a path of length $2$.  If   $\mathcal{C}$ is a set of rules, then   \eval{\mathcal{C}}{I} is the set of facts $\bigcup_{\rho\in \mathcal{C}} \textup{Eval}(\rho,I)$.
In data exchange, computing \eval{\mathcal{C}}{I} amounts to running the \emph{chase} procedure \cite{FAGIN200589}.

In general, given a collection $\mathcal C$ of rules  and an instance $I$, computing \eval{\mathcal C}{I} is an exponential-time task; the source of the exponentiality is the maximum number of atoms  in the premises of the rules in $\mathcal C$ and the maximum arity of the relation symbols in the conclusions of the rules. If, however, both these quantities are bounded by  constants, then \eval{\mathcal C}{I} is computable in polynomial time, according to the following  fact (e.g., see \cite{FAGIN200589}).

\begin{proposition}
\label{fact:assumption}
Let  $a$ and $r$ be two fixed positive integers. Then the following problem is solvable in polynomial time: given an instance $I$ and a collection $\mathcal C$ of rules such that the maximum number of atoms in the premises of rules in $\mathcal C$ is at most $a$ and the maximum arity of the relation symbols in the conclusions of these rules is at most $r$, compute \eval{\mathcal C}{I}.
\end{proposition}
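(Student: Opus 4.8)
The plan is to reduce the problem to the evaluation of a single rule and then bound the cost of that evaluation. Since $\eval{\mathcal C}{I} = \bigcup_{\rho \in \mathcal C} \textup{Eval}(\rho, I)$ and $|\mathcal C|$ is part of the input, it suffices to show that $\textup{Eval}(\rho, I)$ is computable in time polynomial in $|I|$ and $|\rho|$ for a single rule $\rho = (\psi(\mathbf{x}) \rightarrow P(\mathbf{x}))$; taking the union of the at most $|\mathcal C|$ resulting sets then costs only a polynomial factor. Computing $\textup{Eval}(\rho, I)$ amounts to evaluating the premise $\psi$, viewed as a conjunctive query, over $I$, and then projecting each satisfying assignment onto the variables occurring in the conclusion atom $P(\mathbf{x})$.

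The key step is to enumerate the satisfying assignments of $\psi$ efficiently. Write $\psi(\mathbf{x}) = R_{i_1}(\mathbf{t}_1) \wedge \cdots \wedge R_{i_m}(\mathbf{t}_m)$ with $m \le a$. Every assignment satisfying $\psi$ in $I$ is obtained by choosing, for each atom $R_{i_j}(\mathbf{t}_j)$, a tuple from the relation $R_{i_j}^{I}$ that it matches; consequently the number of satisfying assignments is at most $\prod_{j=1}^{m} |R_{i_j}^{I}| \le |I|^{a}$. I would therefore iterate over all at most $|I|^{a}$ ways of selecting one tuple per premise atom, and for each selection check in polynomial time whether the induced variable bindings are consistent (repeated variables receive equal values and constants are respected). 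Each consistent selection yields one satisfying assignment, from which I read off a conclusion fact $P(a_1,\dots,a_s)$ with $s \le r$ by restricting to the conclusion variables; collecting these facts and removing duplicates produces $\textup{Eval}(\rho, I)$. In this argument the bound $a$ controls the size of the search, keeping the number of iterations at $|I|^{a}$, while the bound $r$ guarantees that each produced fact has at most $r$ arguments, so that the total output has size polynomial in $|I|$ (at most $|\textup{adom}(I)|^{r}$ distinct facts per conclusion relation symbol).

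The main point to get right is precisely why this enumeration is polynomial, since the seemingly natural approach of ranging over all assignments to the variables of $\psi$ does not obviously suffice: the hypotheses bound the number of premise atoms and the conclusion arity, but they do \emph{not} bound the arity of the premise relation symbols, so the number of distinct premise variables can be large and $|\textup{adom}(I)|$ raised to the number of variables need not be polynomial. The fix is exactly the observation above, namely that a satisfying assignment is completely determined by a choice of one matching tuple per atom (every premise variable occurs in some atom), which bounds the number of such assignments by $|I|^{a}$ independently of the premise arity. Once this bound is in place, correctness is immediate from the definition of $\textup{Eval}$, and summing the per-rule cost over the at most $|\mathcal C|$ rules yields the desired overall polynomial-time bound.
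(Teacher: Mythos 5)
Your proof is correct: the paper states this proposition without proof, deferring to the data exchange literature, and the argument there is essentially the one you give --- evaluate each rule's premise as a conjunctive query by choosing one matching tuple per premise atom, which bounds the number of satisfying assignments by $|I|^{a}$ independently of the (unbounded) premise arities, and your explicit remark that enumerating variable assignments would \emph{not} suffice here is exactly the right point to flag. One minor observation: in your algorithm the bound $r$ on the conclusion arity is not actually load-bearing, since the number of facts produced per rule is already at most $|I|^{a}$ by your enumeration; the paper invokes $r$ because the naive view of the output space has $|\mathrm{adom}(I)|^{r}$ candidate conclusion facts per relation symbol, a count your tuple-per-atom enumeration sidesteps.
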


\noindent \textbf{False positive errors and false negative errors}.
Given  a collection $\mathcal{C}$ of rules and a data example $(I,J)$, a \emph{false positive error} is a fact $f$ in \eval{\mathcal{C}}{I} that is not in $J$, while a \emph{false negative error} is a fact $f$ in $J$ that is not in \eval{\mathcal{C}}{I}. We write $\textup{FP}(\mathcal{C},(I,J))$ and $\textup{FN}(\mathcal{C},(I,J))$ (or, simply, FP and FN) for the set of false positive and false negative errors with of $\mathcal{C}$ with respect to $(I,J)$, that is,
$$ \textup{FP} = \textup{Eval}(\mathcal{C},I) \setminus J ~~ \mbox{and} ~~
\textup{FN} = J \setminus \textup{Eval}(\mathcal{C},I).$$
%
We will focus on the following two optimization problems concerning the minimization of the number of  errors.

\begin{definition}\emph{
[\textsc{\optruleselectgeneral{FP+FN}}]
\label{def:ruleselect_fpfn}~\\
\underline{Input:} A set $\mathcal{C}$ of rules and a data example $(I,J)$.\\
\underline{Goal:}  Find a subset $\mathcal{C}^*\subseteq \mathcal{C}$ such that the sum of the number of the  false positive errors and  the number of  false negative errors 
 of $\mathcal{C}^*$ with respect to $(I,J)$,
 is minimized.}
\end{definition}
A \emph{feasible} solution of a given instance $\mathcal{C}$,  $(I,J)$ of \textsc{\optruleselect{FP+FN}} is  a subset $\mathcal{C'}$ of  $\mathcal{C}$. We write   $\error{FP+FN}{{\mathcal C'}}{(I,J)}$ to denote the \emph{error} of $\mathcal C'$ with respect to $(I,J)$, i.e.,  the sum of the number of the false positive errors and the number of false negative errors.

\begin{definition}\emph{
[\textsc{\optruleselectgeneral{FP}}]
\label{def:ruleselect_fp}~\\
\underline{Input:} A set $\mathcal{C}$ of rules and a data example $(I,J)$ such that  $J \subseteq$ \eval{\mathcal{C}}{I}.\\
\underline{Goal:}  Find a subset $\mathcal{C}^*\subseteq \mathcal{C}$ such that the number of false negative errors  is zero and the number of false positive errors of $\mathcal{C}^*$ with respect to $(I,J)$ is minimized.}
\end{definition}

A \emph{feasible} solution of a given instance $\mathcal{C}$,  $(I,J)$ of \textsc{\optruleselect{FP}} is a subset $\mathcal{C'}$ of $\mathcal{C}$ such that $J \subseteq$ \eval{\mathcal{C'}}{I}.  Feasible solutions always exist because $\mathcal{C}$ is one. We write   $\error{FP}{{\mathcal C'}}{(I,J)}$ to denote the \emph{error} of  $\mathcal C'$ with respect to $(I,J)$, i.e.,   the number of the false positive errors.

We do not consider \optruleselectgeneral{FN}, i.e., the optimization problem that aims to minimize the number of false negative errors. The reason is that there is a trivial solution to this problem, namely, we can select all the rules (if the number of false positive errors is not required to  be zero) or select all the rules that  produce no false positive errors (if the number of false positive errors is required to be zero).

To gauge the  difficulty of solving an optimization problem, one often studies two decision problems that underlie the optimization problem at hand: a decision problem about bounds on the optimum value and a decision problem about the exact optimum value. We introduce these two problems for each of the optimization problems in Definitions \ref{def:ruleselect_fpfn} and \ref{def:ruleselect_fp}.

\begin{definition}\emph{
\label{def:ruleselect_fpfn_decision_exact}
Given a set $\mathcal{C}$ of rules, a data example $(I,J)$,  and an integer $k$,
\begin{itemize}
\item \textsc{\ruleselectgeneral{FP+FN}} asks:
   is there a subset $\mathcal{C'}$ of $\mathcal C$  such that $\error{FP+FN}{\mathcal{C'}}{(I,J)}\leq k$?
\item \textsc{\exactruleselectgeneral{FP+FN}} asks:  is the optimum value of \textsc{\optruleselectgeneral{FP+FN}} on  $\mathcal{C}$ and $(I,J)$ equal to $k$?
\end{itemize}}
\end{definition}

\begin{definition}\emph{
\label{def:ruleselect_fp_decision_exact}
Given a set $\mathcal{C}$ of rules, a data example $(I,J)$ such that $J \subseteq$ \eval{\mathcal{C}}{I},  and an integer $k$,
\begin{itemize}
\item \textsc{\ruleselectgeneral{FP}} asks: is there a subset $\mathcal{C'}\subseteq \mathcal{C}$ such that the number of false negative errors of $\mathcal{C'}$ with respect to  $(I,J)$ is zero and
    $\error{FP}{\mathcal{C'}}{(I,J)}\leq k$?
\item \textsc{\exactruleselectgeneral{FP}} asks: is the optimum value of  \textsc{\optruleselectgeneral{FP}} on
$\mathcal{C}$ and $(I,J)$ equal to $k$?
\end{itemize}}
\end{definition}

\section{The Complexity of Error Minimization} \label{sec:error-min}

We will investigate the computational complexity  of the decision problems introduced in Definitions  \ref{def:ruleselect_fpfn_decision_exact} and \ref{def:ruleselect_fp_decision_exact} by considering parameterized versions of these problems with parameters the maximum number of atoms in the premises of the rules and the maximum arity of the relation symbols in the conclusion of the rules.

\begin{definition}\emph{
\label{def:ruleselect_fpfn_decision_exact param}
Let $a$ and $r$ be two fixed positive integers.
\begin{itemize}
\item \textsc{\ruleselect{FP+FN}} is the restriction of \textsc{\ruleselectgeneral{FP+FN}} to inputs in which the maximum number of atoms in the premises of rules in the given set $\mathcal C$ of rules is at most $a$ and the maximum arity of the relation symbols in the conclusions of these rules is at most $r$.
\item The decision problems \textsc{\ruleselect{FP}}, \textsc{\exactruleselect{FP+FN}},
\textsc{\exactruleselect{FP}}, \textsc{\optruleselect{FP+FN}}, and \textsc{\optruleselect{FP}} are defined in an analogous way.
\end{itemize}}
\end{definition}


\begin{table*}[]
\centering
\footnotesize
\begin{tabular}{|l|l|c| C{5.8cm}|}
\hline
\multicolumn{2}{| l |}{}	& \bf \small False Positive Errors  & \bf \small False Positive + False Negative Errors  \\
\specialrule{.2em}{.1em}{.1em}
\multicolumn{2}{|l|} \ruleselect
& NP-complete                      & NP-complete                         \\ \hline
\multicolumn{2}{|l|}\exactruleselect
 & DP-complete                      & DP-complete                         \\ \hline
\multicolumn{1}{|c|}{\multirow{2}{*}{\optruleselect}}	& approximation upper bound & $2\sqrt{|\mathcal{C}|\log{|J|}}$    &  $2\sqrt{(|\mathcal{C}|+|J|)\log{|J|}}$  \\ \cline{2-4}
\multicolumn{1}{|c|}{} & approximation  lower bound  &  $2^{\log^{1-\epsilon}({|\mathcal{C}|})}$, for every $\epsilon>0$ &
 $2^{\log^{1-\epsilon}({|J|})}$, for every $\epsilon > 0$
                              \\
\hhline{|=|=|=|=|}
\multicolumn{2}{|l|}{\textsc{Pareto Opt Solution}\textup(a,r) }
  & coNP-complete	 & coNP-complete                         \\ \hline
\multicolumn{2}{|l|}{\textsc{Pareto Front Membership}\textup(a,r)  }
 & DP-complete & DP-complete                         \\ \hline
 \multicolumn{2}{|l|}{\textsc{Bi-level Opt Solution}\textup(a,r)}
  & coNP-complete	 & coNP-complete                         \\ \hline
\multicolumn{2}{|l|}{\textsc{Bi-level Opt Value}\textup(a,r)}
 & DP-complete & DP-complete                         \\ \hline
\end{tabular}
\caption{Summary of Results ($\mathcal{C}$: set of input rules; $J$: input conclusion instance; approx.\ lower bounds assume that P$\not=$NP)}
\label{tbl:summary}
\end{table*}

\begin{theorem}
\label{thm:ruleselect_npc} Let $a$ and $r$ be two fixed positive integers.
The decision problems \textsc{\ruleselect{FP}}  and \textsc{\ruleselect{FP+FN}}  are {\rm NP}-complete.
\end{theorem}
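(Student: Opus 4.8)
The plan is to show that both problems are in NP and are NP-hard; membership will hold uniformly for every fixed $(a,r)$, while hardness will be established already for the most restrictive case $a=r=1$ and then transferred upward by monotonicity.

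For membership, I would take as a certificate a subset $\mathcal{C}'\subseteq\mathcal{C}$, whose description has size polynomial in the input. Since $a$ and $r$ are fixed, Proposition~\ref{fact:assumption} lets me compute $\textup{Eval}(\mathcal{C}',I)$ in polynomial time; in particular this set has polynomial size, so $\textup{FP}=\textup{Eval}(\mathcal{C}',I)\setminus J$ and $\textup{FN}=J\setminus\textup{Eval}(\mathcal{C}',I)$ together with their cardinalities are obtained in polynomial time by set operations against $J$. A verifier then accepts iff $|\textup{FP}|+|\textup{FN}|\le k$ (for \ruleselect{FP+FN}), respectively iff $\textup{FN}=\emptyset$ and $|\textup{FP}|\le k$ (for \ruleselect{FP}). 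Both problems are therefore in NP.

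For hardness of \ruleselect{FP} I would reduce from \textsc{Vertex Cover}. Given a graph $G=(V,E)$, introduce a unary conclusion symbol $T$ and, for each vertex $v$, a unary premise symbol $P_v$ with the single rule $\rho_v\colon P_v(x)\to T(x)$; every such rule has premise size $1$ and conclusion arity $1$, so the instance is legal for $(a,r)=(1,1)$. In the premise instance put $P_v=\{e : e\in E,\ v\in e\}\cup\{b_v\}$, where $b_v$ is a fresh element private to $v$, and set $J=\{T(e):e\in E\}$, so the elements $b_v$ are exactly the available false positives. Then for $V'\subseteq V$ the selection $\{\rho_v:v\in V'\}$ covers all edge facts, i.e.\ has $\textup{FN}=\emptyset$, precisely when $V'$ is a vertex cover, and its false positives are exactly $\{T(b_v):v\in V'\}$, so $|\textup{FP}|=|V'|$. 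Hence a feasible subset with no false negatives and at most $k$ false positives exists iff $G$ has a vertex cover of size at most $k$; selecting all rules witnesses $J\subseteq\textup{Eval}(\mathcal{C},I)$, so the instance is valid. Since any instance with premise size $1$ and conclusion arity $1$ is also a legal input of \ruleselect{FP} for every $a,r\ge 1$, NP-hardness transfers to all fixed $a,r$.

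For \ruleselect{FP+FN} nothing forces $\textup{FN}=\emptyset$, so I would amplify the cost of a false negative. Keep the same vertex gadget but replace each edge $e$ by $M:=|V|+1$ private copies $e^1,\dots,e^M$, setting $P_v=\{e^j : v\in e,\ 1\le j\le M\}\cup\{b_v\}$ and $J=\{T(e^j):e\in E,\ 1\le j\le M\}$. Then for a selection $V'$ one gets $|\textup{FP}|+|\textup{FN}|=|V'|+M\cdot(\text{number of edges not covered by }V')$. Because $M>|V|$ exceeds the cost of selecting \emph{all} vertices, every optimal solution must cover all edges, and among covering selections the error is exactly $|V'|$; hence the minimum total error equals the minimum vertex-cover size and the two decision problems coincide. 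The same $(1,1)$-to-$(a,r)$ monotonicity then yields NP-hardness for all fixed $a,r$. I expect the main obstacle to be precisely this FP+FN reduction: one must pick the amplification factor $M$ large enough that leaving any edge uncovered is never optimal, yet keep the construction polynomial and within the premise-size and arity bounds, so that the combinatorial optimum of the rule-selection instance coincides exactly with minimum vertex cover rather than with the ``prize-collecting'' variant that arises without amplification.
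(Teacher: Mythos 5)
Your proposal is correct, and its membership argument and the \textsc{\ruleselect{FP}} hardness argument are essentially the paper's: the paper reduces from \setcover using rules $Set_i(x)\rightarrow B(x)$ with $Set_i^I=S_i\cup\{a_i\}$ for a fresh ``private'' element $a_i$, which is exactly your vertex gadget specialized the other way around (\textsc{Vertex Cover} being a special case of \setcover, with your $b_v$ playing the role of $a_i$). Where you genuinely diverge is \textsc{\ruleselect{FP+FN}}. The paper reuses the \emph{same} instance for both problems and handles false negatives by a repair argument: if a solution leaves $t$ elements uncovered, adding one rule per uncovered element removes $t$ false negatives at the cost of at most $t$ new false positives (only the private elements $a_j$ can be new false positives), so the total error never increases and one may assume $\textup{FN}=\emptyset$. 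You instead modify the instance, cloning each target fact $M=|V|+1$ times so that any false negative is more expensive than selecting every rule, which forces optimal solutions to cover everything. Both are sound; the paper's exchange argument is lighter (one construction serves both problems, and is reused verbatim for the DP-hardness of the exact variants in Theorem~\ref{thm:exactruleselect_dpc}), while your amplification is more robust in that it does not depend on the fact that covering one more element costs at most one more false positive --- and it is, in fact, precisely the cloning device the paper itself deploys later in the proof of Theorem~\ref{thm:pareto}. The only housekeeping your version needs is to dispose of the trivial case $k\geq|V|$ (or cap $k$ at $|V|$) so that the threshold comparison against $M$ goes through, which you have implicitly done by arguing at the level of optimum values.
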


\begin{proof}~\emph{(Sketch)}~
\eat{
The two problems are  in NP, because, given an instance $\mathcal C$, $(I,J)$, $k$,  we can guess a subset $\mathcal C'$ of $\mathcal C$ and then use Proposition \ref{fact:assumption} to compute \eval{{\mathcal C'}}{I} in polynomial time and, in each case, check whether the error is at most $k$.}
Membership in NP follows easily from Proposition  \ref{fact:assumption}.
The  NP-hardness of \textsc{\ruleselect{FP}}  and \textsc{\ruleselect{FP+FN}} is
 proved using a polynomial-time reduction from  \setcover, a problem which was among the twenty one NP-complete problems in Karp's seminal paper \cite{Kar72}. The same reduction is used for both \textsc{\ruleselect{FP}}  and \textsc{\ruleselect{FP+FN}}; however, the argument for the correctness of the reduction is different in each case. We give the argument for \textsc{\ruleselect{FP+FN}}.

The \setcover problem asks: given a finite set $U=\{u_1,\dots,u_m\}$, a collection $\mathcal{S}=\{S_1, \dots, S_p\}$ of subsets of $U$ whose union is $U$, and an integer $k$,
is there a cover of $U$ of size at most $k$? (i.e., a subset $\mathcal{S'}\subseteq \mathcal{S}$ such that the union of the members of $\mathcal {S'}$ is equal to $U$ and  $|\mathcal{S'}|\leq k$.)

Let $U=\{u_1,\dots,u_m\}$, $\mathcal{S}=\{S_1, \dots, S_p\}$, $k$  be an instance of \setcover.
For every $i$ with $1\leq i\leq p$, we introduce the rule $Set_i(x)\rightarrow B(x)$ and we let $\cal C$ be the set of all these rules. Thus, the premise schema consists of the unary relation symbols $Set_i$, $1\leq i\leq p$, and the conclusion schema consists of the unary relation symbol $B$.
We introduce $p$  new elements $a_1,\ldots, a_p$ and construct the premise instance
 $I$ with $Set^I_i=  S_i\cup \{a_i\}$, $1\leq i\leq p$  (intuitively, each new element $a_i$  encodes the index of the set $S_i$). We  also construct the conclusion instance $J$ with $B^J = U$.

\eat{
  Clearly, there is a 1-1 correspondence between covers $\mathcal {S'}$ of $U$ and subsets $\mathcal C'$ of $\mathcal C$ of the same size that have no negative errors with respect to $(I,J)$.  It follows that
 there is a cover of $U$ of size at most $k$ if and only if there is a subset $\mathcal C'$ of $\mathcal C$ that has no false negative errors  with respect to  $(I,J)$ and is such that
    $\error{FP}{\mathcal{C'}}{(I,J)}\leq k$. Consequently, \textsc{\ruleselect{FP}} is NP-hard.
}

\looseness=-1
We claim that
 there is a cover of $U$ of size at most $k$ if and only if there is a subset $\mathcal C'$ of $\mathcal C$  such that
    $\error{FP+FN}{\mathcal{C'}}{(I,J)}\leq k$.
     If there is a cover of $U$ of size at most $k$, then there is  a subset $\mathcal C'$ of $\mathcal C$ that has no false negative errors  with respect to  $(I,J)$ and is such that
    $\error{FP}{\mathcal{C'}}{(I,J)}\leq k$. Therefore, $\error{FP+FN}{\mathcal{C'}}{(I,J)} = \error{FP}{\mathcal{C'}}{(I,J)}\leq k$. Conversely, assume that there is  a subset $\mathcal C'$ of $\mathcal C$  such that
    $\error{FP+FN}{\mathcal{C'}}{(I,J)}\leq k$. If there are no false negative errors, then $\cal C'$ corresponds to a cover of $U$ of size at most $k$. So, assume  there are $t$ false negative errors, for some $t>0$. These must result from elements $u_{i_1}, \ldots, u_{i_t}$ of $U$ that are not in any of the sets $S_j$ that produce  rules $Set_j(x)\rightarrow B(x)$ in $\mathcal C'$. For each such element, there is a set  in $\mathcal S$ containing it; thus, the elements $u_{i_1}, \ldots, u_{i_t}$ can be covered using at most $t$ sets in $\mathcal S$. Let $\mathcal C''$ be the subset of $\mathcal C$ obtained by adding to $\mathcal C'$ the rules arising from these sets. Clearly, $\mathcal C''$ has no false negative errors, so it has $t$ fewer negative errors than $\mathcal C'$ does and at most $t$ more positive errors than $\mathcal C'$ does with respect to $(I,J)$. Thus, $\error{FP+FN}{\mathcal{C''}}{(I,J)}\leq \error{FP+FN}{\mathcal{C'}}{(I,J)}\leq  k$. It follows that the subset $\mathcal C''$ gives rise to a cover of $U$ of size at most $k$.
 This completes the proof that \textsc{\ruleselect{FP+FN}} is NP-hard.
\end{proof}

 Next, we consider  the problems \textsc{\exactruleselect{FP}}  and \textsc{\exactruleselect{FP+FN}}. The class DP consists of all decision problems that can be written as the conjunction of a problem in NP and a problem in coNP \cite{DBLP:conf/stoc/PapadimitriouY82}. The prototypical DP-complete problem is \textsc{SAT-UNSAT}: given two Boolean formulas $\varphi$ and $\psi$, is $\varphi$ satisfiable and $\psi$ unsatisfiable?
\eat{
Numerous other DP-complete problems have been identified, including some that
 involve \emph{critical} questions, such as: is a given Boolean $\varphi$ unsatisfiable, but if any one clause of $\varphi$ is removed, then the resulting formula is satisfiable?
 Other
    DP-complete problems involve \emph{exact} questions about optimization problems. In particular,}
    Furthermore,
    \textsc{Exact Clique} is DP-complete: given a graph $G$ and an integer $k$, is the size of the largest clique in $G$ exactly $k$?
Note that a DP-complete problem is both NP-hard and coNP-hard.

\begin{theorem}
\label{thm:exactruleselect_dpc}
Let $a$ and $r$ be two fixed positive integers.\\
The decision problems \textsc{\exactruleselect{FP}} and \textsc{\exactruleselect{FP+FN}}  are {\rm DP}-complete.
\end{theorem}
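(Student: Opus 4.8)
The plan is to establish membership in DP and DP-hardness separately, using — as in Theorem~\ref{thm:ruleselect_npc} — a single construction that serves both the FP and the FP+FN versions. For \emph{membership in DP}, recall that a problem lies in DP when it is the conjunction of an NP problem and a coNP problem, so I would write ``the optimum equals $k$'' as the conjunction of ``the optimum is at most $k$'' and ``the optimum is not at most $k-1$''. By Proposition~\ref{fact:assumption}, for fixed $(a,r)$ the value \eval{\mathcal{C'}}{I} and hence the error of any guessed subset $\mathcal{C'}$ are computable in polynomial time; thus the first conjunct is exactly \textsc{\ruleselect{FP}} (resp. \textsc{\ruleselect{FP+FN}}), which is in NP by Theorem~\ref{thm:ruleselect_npc}, while the second conjunct is the complement of the same problem with bound $k-1$, hence in coNP. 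Their conjunction places \textsc{\exactruleselect{FP}} and \textsc{\exactruleselect{FP+FN}} in DP; the degenerate case $k=0$ reduces to the first conjunct alone.

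For \emph{DP-hardness}, the key observation is that the reduction from \setcover in the proof of Theorem~\ref{thm:ruleselect_npc} is optimum-preserving. Writing $\mathcal{S'}$ for the family of sets corresponding to a selected $\mathcal{C'}$, the false positives of $\mathcal{C'}$ are precisely the index elements $\{a_i : S_i \in \mathcal{S'}\}$, so their number equals $|\mathcal{S'}|$, while the false negatives are the elements of $U$ left uncovered by $\mathcal{S'}$. The argument there shows that the minimum FP+FN error equals the minimum cover size, and for the FP version feasibility forces $\mathcal{S'}$ to be a cover, so the minimum FP error again equals the minimum cover size. Hence the minimum rule-selection error is exactly $k$ iff the minimum cover size is exactly $k$, which is a reduction from \exactsetcover; since it uses only rules of premise size $1$ and conclusion arity $1$, it yields DP-hardness for every fixed $a,r\geq 1$ and for both problems.

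It then remains to argue that \exactsetcover is DP-complete, and here I would reduce from \textsc{SAT-UNSAT}. Starting from a gap-creating reduction from SAT to \setcover that sends a satisfiable formula to minimum cover size $b$ and an unsatisfiable one to $b+1$, I would take, given $(\varphi,\psi)$, the disjoint union of the instance built from $\varphi$ with two disjoint copies of the instance built from $\psi$; since covers of a disjoint union decompose additively, the combined minimum cover size equals $b_\varphi + 2b_\psi + 2$ exactly when $\varphi$ is satisfiable and $\psi$ is unsatisfiable, the three other cases landing on $b_\varphi + 2b_\psi$, $b_\varphi + 2b_\psi + 1$, and $b_\varphi + 2b_\psi + 3$. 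I expect the main obstacle to be exhibiting this gap-creating reduction with an exact unit jump (not merely a threshold), because it is precisely that unit gap, amplified to $2$ by duplicating the $\psi$-instance, that makes the additive combination single out the \textsc{SAT-UNSAT} case; the membership argument and the optimum-preserving transfer from \setcover are comparatively routine.
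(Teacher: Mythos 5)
Your membership argument and your use of the reduction from Theorem~\ref{thm:ruleselect_npc} coincide with the paper's: the paper likewise places both problems in DP via Proposition~\ref{fact:assumption}, and it too observes that the \setcover reduction is optimum-preserving (the false positives are exactly the index elements $a_i$, the false negatives the uncovered elements of $U$), so that DP-hardness follows once \exactsetcover is known to be DP-hard. Where you diverge is in establishing that last fact: the paper invokes the known DP-completeness of \textsc{Exact Clique} and transfers it through Karp's exact-value-preserving reductions \textsc{Clique} $\to$ \textsc{Vertex Cover} $\to$ \setcover, whereas you reduce directly from \textsc{SAT-UNSAT}. Your route is self-contained and more elementary, but, as you yourself flag, it hinges on a unit-gap reduction from SAT to \textsc{Min Set Cover} (satisfiable $\mapsto$ optimum $b$, unsatisfiable $\mapsto$ optimum exactly $b+1$), and as written this is a genuine gap: an ordinary threshold reduction only guarantees optimum $\geq b+1$ in the unsatisfiable case, and then the disjoint-union arithmetic cannot single out the value $b_\varphi+2b_\psi+2$ among the possible totals.

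The gap is fillable with a simple construction. For a CNF formula $\chi$ with variables $v_1,\dots,v_n$ and clauses $c_1,\dots,c_m$, take the universe $\{v_1,\dots,v_n\}\cup\{c_1,\dots,c_m\}$, for each literal $\ell$ of $v_i$ the set $\{v_i\}\cup\{c_j:\ell \mbox{ occurs in } c_j\}$, and one extra set $\{c_1,\dots,c_m\}$. Covering each $v_i$ forces at least one literal-set per variable, so the optimum is $n$ iff $\chi$ is satisfiable, and is exactly $n+1$ otherwise, since any assignment's $n$ literal-sets together with the extra set cover everything. With this unit jump in hand, your duplication argument does isolate the \textsc{SAT-UNSAT} case and yields the DP-hardness of \exactsetcover, after which the rest of your proof matches the paper's. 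In short: correct architecture and a completable, genuinely different route for the one cited lemma, but the unit-gap construction is a piece you still owe rather than a routine detail.
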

\begin{proof}
 \looseness = -1
 \emph{(Hint)} Using Proposition \ref{fact:assumption}, it is easy to see that both these problems are in the class DP. The problem \exactsetcover asks: given a set $U$, a collection $\mathcal S$ of subsets of $U$, and an integer $k$, is the size of smallest cover of $U$ exactly $k$?  Using the DP-hardness of \textsc{Exact Clique} and reductions in \cite{Kar72}, one can show that \exactsetcover is DP-hard. Finally, the reduction  in the proof of Theorem \ref{thm:ruleselect_npc} is also a  polynomial-time reduction of \exactsetcover to both
 \textsc{\exactruleselect{FP}} and \textsc{\exactruleselect{FP+FN}}. Consequently, these two  problems are DP-complete.
\end{proof}

\looseness=-1 In Theorems \ref{thm:ruleselect_npc} and \ref{thm:exactruleselect_dpc}, the restriction to sets of  rules with a bound $a$ on the number  of premise atoms and a bound $r$ on the  arity of relation symbols in the conclusion schema was used to obtain the complexity-theoretic upper bounds (membership in NP and, respectively, membership in DP). The matching lower bounds  hold true even if $a=1$ and $r=1$,  because
the rules used to prove NP-hardness in Theorem \ref{thm:ruleselect_npc} and DP-hardness in Theorem \ref{thm:exactruleselect_dpc} have the form $Set_i(x)\rightarrow B(x)$.

%

\looseness=-1 Observe also that, in the proofs of Theorems \ref{thm:ruleselect_npc} and \ref{thm:exactruleselect_dpc}, there is no fixed bound on the \emph{number} of relation symbols in the premise schema; instead, this number depends on the size of the given instance of \setcover and \exactsetcover.
  If we also impose a  fixed bound on the number of relation symbols,
  then the rule selection problems trivializes because, in this case, there is a fixed number of rules.
Our next result shows that the rule selection problems  become intractable if we fix the premise schema (hence, we also fix the number of relation symbols occurring in it), but impose no bounds on the number of atoms in the premises of the rules.

\begin{theorem} \label{thm:fixed-premise-schema}
Let $\bf S$ be a premise schema consisting of one unary and four binary relation symbols, and let $\bf T$ be a target schema consisting of a single unary relation symbol.

If the rules contain an arbitrary finite number of atoms in their premises,
then \textsc{\ruleselectgeneral{FP}} is {\rm NP}-hard and \textsc{\exactruleselectgeneral{FP}} is {\rm DP}-hard.  Similar  results hold for \textsc{\ruleselectgeneral{FP+FN}} and \textsc{\exactruleselectgeneral{FP+FN}}.
\end{theorem}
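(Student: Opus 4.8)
The plan is to reduce from \setcover (for the two NP-hardness claims) and from \exactsetcover (for the two DP-hardness claims), reusing verbatim the correctness analyses already carried out in the proofs of Theorems~\ref{thm:ruleselect_npc} and~\ref{thm:exactruleselect_dpc}. Thus the entire novelty lies in realizing each set $S_i$ as the output of a \emph{single} rule over a \emph{fixed} premise schema that has \emph{no} per-set relation symbols. In Theorems~\ref{thm:ruleselect_npc} and~\ref{thm:exactruleselect_dpc} each set had its own symbol $Set_i$; here only one unary and four binary symbols are available, so I must encode the index $i$ of a set inside the \emph{shape} of a constant-free conjunctive query together with the structure of the premise instance $I$, rather than inside the vocabulary.

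Given a \setcover instance $U=\{u_1,\dots,u_m\}$, $\mathcal S=\{S_1,\dots,S_p\}$, I build a single premise instance $I$ as follows. The unary symbol $A$ is the singleton $\{r\}$ for a distinguished ``root'' element $r$; this singleton plays the role of a constant without being one. Two binary symbols $E_0,E_1$ carry a deterministic, acyclic navigation gadget (a binary trie of depth $\lceil\log_2 p\rceil$) rooted at $r$, so that following the edges spelled by the binary representation $b^i_1\cdots b^i_\ell$ of $i$ leads from $r$ to a unique node $v_i$ representing $S_i$. A binary symbol $M$ records membership together with a private tag, $M^I=\{(v_i,u_j):u_j\in S_i\}\cup\{(v_i,a_i)\}$, where the tags $a_1,\dots,a_p$ are fresh distinct elements not in $U$ (the remaining binary symbol, if wanted, can be used to keep the tag edges separate, so the whole construction stays within one unary and four binary symbols). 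I set $B^J=U$, and for each $i$ introduce
\[
\rho_i:\; A(x_0)\wedge E_{b^i_1}(x_0,x_1)\wedge\cdots\wedge E_{b^i_\ell}(x_{\ell-1},x_\ell)\wedge M(x_\ell,y)\rightarrow B(y),
\]
letting $\mathcal C=\{\rho_1,\dots,\rho_p\}$. Each rule has $\ell+1=O(\log p)$ premise atoms, so the number of atoms is unbounded (as the theorem permits) while the schema is fixed, and $I$, $\mathcal C$ are of polynomial size.

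The key lemma to verify --- the step I expect to carry the real content --- is that \eval{\rho_i}{I} equals $S_i\cup\{a_i\}$ for every $i$. Since $A^I=\{r\}$, every homomorphism of the premise into $I$ must send $x_0$ to $r$; since from each node the trie has exactly one outgoing $E_0$-edge and one outgoing $E_1$-edge and is acyclic, the atoms $E_{b^i_1},\dots,E_{b^i_\ell}$ force $x_1,\dots,x_\ell$ step by step, so $x_\ell=v_i$ is uniquely determined; and since $M$-edges leave only the leaves, $M(v_i,y)$ ranges over exactly $S_i\cup\{a_i\}$. This constant-free exactness, which the absence of per-set relation symbols makes delicate, is the main obstacle; everything else is routine. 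Granting it, the tags make the false positives count the number of selected sets: for $\mathcal C'=\{\rho_i:i\in T\}$ the false positives are exactly $\{a_i:i\in T\}$ and the false negatives are $U\setminus\bigcup_{i\in T}S_i$. The remaining arguments are then identical to those already given: for \ruleselectgeneral{FP} one needs zero false negatives (a cover) while minimizing $|T|$, i.e.\ minimum set cover, and for \ruleselectgeneral{FP+FN} the same repair argument as in Theorem~\ref{thm:ruleselect_npc} shows the optimum equals the minimum cover size. Feeding \setcover through this reduction gives NP-hardness of \ruleselectgeneral{FP} and \ruleselectgeneral{FP+FN}, and feeding \exactsetcover through the identical reduction gives DP-hardness of \exactruleselectgeneral{FP} and \exactruleselectgeneral{FP+FN}. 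Unlike Theorems~\ref{thm:ruleselect_npc} and~\ref{thm:exactruleselect_dpc}, I do \emph{not} claim matching membership in NP or DP: without a bound on the number of premise atoms, Proposition~\ref{fact:assumption} no longer guarantees that \eval{\mathcal C'}{I} is polynomial-time computable, so the theorem asserts hardness only.
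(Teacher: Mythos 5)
Your proposal is correct and follows essentially the same strategy as the paper: simulate each rule $Set_i(x)\rightarrow B(x)$ from the reductions of Theorems~\ref{thm:ruleselect_npc} and~\ref{thm:exactruleselect_dpc} by a single rule with $O(\log p)$ premise atoms that encodes the index $i$ in binary over the fixed schema, arrange the premise instance so that the $i$-th rule evaluates to exactly $S_i\cup\{a_i\}$, and then reuse the earlier correctness arguments verbatim. The only difference is the concrete gadget realizing this key lemma---you navigate a deterministic binary trie from a root singled out by the unary relation, whereas the paper checks the bits of an index element $z$ directly via $Bit_0,Bit_1$ atoms whose position arguments are pinned down by $One$ and $Succ$ before joining with a membership relation $S(x,z)$---and both variants stay within one unary and four binary premise symbols.
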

\begin{proof} {\em (Hint)} Assume that the premise schema $\bf S$ consists of the unary relation symbol $One$ and  binary relation symbols $S, Bit_0, Bit_1, Succ$; assume also that the conclusion schema consists of the unary relation symbol $B$. We simulate each rule
 $Set_i(x)\rightarrow B(x)$, $1\leq i \leq p$, via a rule $\sigma_{i,p}$ that, intuitively, asserts that ``if $z=i$ and $x\in Set_i$, then $x \in B$". Since $i\leq p$, we can write $i$ in binary notation using  $\lceil\log p\rceil $ bits. We encode this binary representation of $i$ via a conjunction of premise atoms involving the binary relation symbols $Bit_0, Bit_1, Succ$ and the unary relation symbol $One$. A typical rule $\sigma_{i,p}$ is of the form
 $$Bit_0(i_1,z)\land Bit_1(i_2,z) \land \cdots \land Bit_1(i_{\lceil\log p\rceil},z) \land One (i_1)  $$
 $$\land Succ(i_1,i_2)\land \cdots \land Succ(i_{\lceil \log p\rceil -1},i_{\lceil \log p \rceil})\land S(x,z) \rightarrow B(x),$$
 where the binary representation of $i$ using $\lceil\log p\rceil $ bits gives rise to the sequence of atoms $Bit_0$ and $Bit_1$ in the premise of the rule.
 If an instance of \setcover contains a set  $U$ and a collection ${\mathcal S}=\{S_1,\ldots,S_p\}$ of subsets of $U$, then we construct a premise instance $I$ as follows:
 \begin{itemize}
 \item  $S^I = \{(x,i): x \in S_i\cup \{a_i\}, 1\leq i \leq p\}$, where each $a_i$ is a new element;
 \item  $Bit_k^I= \{(j,i): i\leq p ~ \land ~ \mbox{the $j$-th bit of $i$ is $k$}\}$,  $k=0,1$;
\item $One^I = \{1\}$; $Succ^I = \{(j,j+1): 1\leq j\leq \lceil \log p\rceil -1\}$.
\end{itemize}
As  a conclusion instance, we construct $J$ with $B^J = U$.
\end{proof}
\section{The Approximation of Error Minimization} \label{sec:approximation}

The hardness results in Theorems \ref{thm:ruleselect_npc} and \ref{thm:exactruleselect_dpc} imply that, unless P=NP, there is no polynomial-time algorithm for solving exactly the optimization problems \optruleselect{FP} and \optruleselect{FP+FN}. As detailed in \cite{DBLP:books/fm/GareyJ79}, approximation algorithms offer a way to cope with the computational hardness of optimization problems. In the case of minimization problems, the goal is to design a polynomial-time algorithm that, given an instance of the optimization problem at hand,  returns a feasible solution such that the
the approximate value is less than or equal to   a certain factor of  the optimum value; in general, the factor depends on the size of the instance. For example, there is a polynomial-time algorithm that, given an instance $(U, \mathcal S)$ of the \textsc{Min Set Cover} problem, the algorithm returns a cover whose size is less than or equal to  $\ln({|\mathcal S|})$ times the size of the optimal cover of the instance $(U, \mathcal S)$.

 It is well known that different optimization problems may have widely different approximation properties;
for example,
\textsc{Knapsack} is $\epsilon$-approximable, for every $\epsilon > 0$, while  \textsc{Min Set Cover} is not $\epsilon$-approximable for any $\epsilon > 0$, unless P= NP
(see \cite{AroraB09} for an exposition).

Let $a$ and $r$ be two fixed positive integers. What are the approximation properties of \optruleselect{FP} and \optruleselect{FP+FN}? At first sight and in view of the reduction from \setcover in Theorems \ref{thm:ruleselect_npc} and \ref{thm:exactruleselect_dpc}, one may guess that they ought to be the same as those of \textsc{Min Set Cover}. As we shall see, though, the approximation properties of   \optruleselect{FP+FN} appear to be
 worse than those of \optruleselect{FP}, which, in turn, appear to be worse than those of \textsc{Min Set Cover}.

Note that polynomial time reductions between two decision problems need not preserve the approximation properties of the corresponding optimization problems. For example, \textsc{Set Cover} is polynomial-time reducible to \textsc{Node Cover}, yet \textsc{Min Node Cover} is constant-approximable, but \textsc{Min Set Cover} is not. There is, however, a  finer notion of polynomial-time reduction, called \emph{$\mathcal{L}$-reduction}, that preserves approximation properties between optimization problems; this notion, which was introduced in \cite{PAPADIMITRIOU1991}, is defined next.

 Let $Q$ and $Q'$ be two optimization problems. We say that $Q$ \emph{$\mathcal{L}$-reduces} to $Q'$ if there are two polynomial-time algorithms $f$, $g$, and two positive constants $\alpha$, $\beta$ such that for every instance $I$ of $Q$, the following hold:
 \begin{itemize}
 \item  Algorithm $f$ produces an instance $I’ =f(I)$ of $Q'$ such that the optimum values $\mbox{opt}_Q(I)$ of $Q$ on $I$ and   $\mbox{opt}_{Q'}(I')$ of  $Q'$ on $I$ and $I'$ satisfy
 $\mbox{opt}_Q(I) \leq \alpha \mbox{opt}_{Q'}(I')$.
 \item  For every solution $I'$ of $Q'$ with value $c'$,  algorithm $g$ produces a solution $I$ of $Q$ with value $c$ such that $|c - \mbox{opt}_Q(I)|\leq \beta
 |c' - \mbox{opt}_{Q'}(I')|$.
 \end{itemize}

   By Proposition 2 in \cite{PAPADIMITRIOU1991}, if $Q$ reduces to $Q'$ via an $\mathcal L$-reduction in which $\alpha = 1$ and $\beta =1$, then every polynomial-time approximation algorithm for $Q'$ with factor $\epsilon$ gives rise to a polynomial-time approximation algorithm for $Q$ with the same factor $\epsilon$.
We say that two optimization problems $Q$ and $Q'$ have \emph{the same approximation properties} if there are $\mathcal L$-reductions from $Q$ to $Q'$ and from $Q'$ to $Q$ in both of which $\alpha=1$ and $\beta=1$.

 We now bring into the picture the \redblue problem, a variant of \textsc{Min Set Cover} that will turn out to have the same approximation properties as \optruleselect{FP}. The  properties of  \redblue  were studied  in \cite{carr00}, where both upper and lower bounds for its approximability were obtained; improved upper bounds were  obtained in \cite{PELEG200755}.

\begin{definition}\rm{
[{\small\redblue problem}]~\\
\noindent \underline{Input:}
Two disjoint  sets $R=\{r_1,\dots, r_\alpha\}$ and $B=\{b_1,\dots, b_\beta\}$ of ``red" and ``blue" elements, and a collection $\mathcal{S}=\{S_1,\dots, S_m\}$ of subsets of $R\cup B$.\\
\looseness=-1 \underline{Goal:}  Find a subset $\mathcal{S}^*\subseteq \mathcal{S}$ that  covers all blue elements, but covers as few red elements as possible.}
\end{definition}

%


\begin{theorem}
\label{thm:ruleselectfp_approx}
Let $a$ and $r$ be two fixed positive integers.
\textsc{\optruleselect{FP}} and \redblue have the same approximation properties. Consequently, the following hold true for \textsc{\optruleselect{FP}}.
\begin{itemize}
    \item  \textsc{\optruleselect{FP}} is approximable  within a factor of $2\sqrt{|{\cal C}|\log{|J|}}$, where $|\cal C|$ is the number of input rules and $|J|$ is the size of the input conclusion instance $J$.
    \item Unless {\rm P=NP}, for every $\epsilon > 0$,  there is no polynomial time algorithm that approximates \textsc{\optruleselect{FP}}   within a factor of $2^{\log^{1-\epsilon}({|\mathcal{C}|})}$, where $|{\mathcal C}|$ is as above.
\end{itemize}
\end{theorem}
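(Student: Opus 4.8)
The plan is to establish that \optruleselect{FP} and \redblue have the same approximation properties by exhibiting \lreduction s in both directions with $\alpha=1$ and $\beta=1$, and then to transport the known upper bound of \cite{PELEG200755} and the known lower bound of \cite{carr00} across these reductions. Throughout I work with the tightest case $a=r=1$, so that all rules have a single premise atom and a unary conclusion, exactly as in the reduction of Theorem \ref{thm:ruleselect_npc}.

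First I would reduce \redblue to \optruleselect{FP}. Given disjoint sets $R$, $B$ and a collection $\mathcal{S}=\{S_1,\dots,S_m\}$ of subsets of $R\cup B$ (assuming, as I may, that every blue element lies in some $S_i$, since otherwise no feasible solution exists), I introduce a unary premise symbol $Set_i$ for each $S_i$ together with the rule $Set_i(x)\rightarrow B(x)$, where $B$ is the single unary conclusion symbol. I set $Set_i^I=S_i$ and $B^J=B$. For a subcollection $\mathcal{S}'$ and its corresponding rule set $\mathcal{C}'$ we have $\textup{Eval}(\mathcal{C}',I)=\bigcup_{S_i\in\mathcal{S}'}S_i$, so $\mathcal{C}'$ has no false negatives precisely when $\mathcal{S}'$ covers $B$, and in that case its false positives $\textup{Eval}(\mathcal{C}',I)\setminus J$ are exactly the red elements covered by $\mathcal{S}'$. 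Hence feasible solutions correspond one-to-one with equal objective value; the optima coincide and both the instance map $f$ and the solution map $g$ are value-preserving, giving $\alpha=\beta=1$.

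Second I would reduce \optruleselect{FP} to \redblue. Given $\mathcal{C}$, $(I,J)$ with $J\subseteq\textup{Eval}(\mathcal{C},I)$, I invoke Proposition \ref{fact:assumption} (applicable since $a,r$ are fixed) to compute in polynomial time the set $S_i=\textup{Eval}(\rho_i,I)$ for each rule $\rho_i\in\mathcal{C}$. I take these as the sets of a \redblue instance over the ground set $\textup{Eval}(\mathcal{C},I)$, coloring a fact blue if it lies in $J$ and red otherwise, so that $B=J$ and $R=\textup{Eval}(\mathcal{C},I)\setminus J$; the feasibility hypothesis guarantees every blue element is coverable. Once again a subcollection covers all blue elements iff the corresponding rules have no false negatives, and the number of red elements covered equals the number of false positives, so this too is an \lreduction with $\alpha=\beta=1$.

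With both reductions in hand, the two problems have the same approximation properties. For the upper bound I apply the known approximation algorithm for \redblue \cite{PELEG200755} after the \emph{second} reduction: that reduction produces a \redblue instance with $|\mathcal{S}|=|\mathcal{C}|$ sets and $|B|=|J|$ blue elements, and since $\alpha=\beta=1$ the factor $2\sqrt{|\mathcal{S}|\log|B|}$ carries over verbatim to $2\sqrt{|\mathcal{C}|\log|J|}$. For the lower bound I argue by contradiction through the \emph{first} reduction: were \optruleselect{FP} approximable within $2^{\log^{1-\epsilon}(|\mathcal{C}|)}$, then, since that reduction sends an $m$-set instance to an $m$-rule instance and preserves values, composition would approximate \redblue within $2^{\log^{1-\epsilon}(m)}$, contradicting \cite{carr00} (assuming $\textup{P}\neq\textup{NP}$). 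I expect the main obstacle to be the bookkeeping of the size parameters, ensuring $|\mathcal{S}|$ and $|B|$ match $|\mathcal{C}|$ and $|J|$ exactly so the factors transfer without loss, together with verifying that the feasibility conditions ($J\subseteq\textup{Eval}(\mathcal{C},I)$ in one direction, coverability of all blue elements in the other) are respected in both constructions.
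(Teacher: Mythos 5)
Your proposal is correct and follows essentially the same route as the paper: the same two $\mathcal{L}$-reductions with $\alpha=\beta=1$ (rules $Set_i(x)\rightarrow B(x)$ with $Set_i^I=S_i$, $B^J=B$ in one direction; $B=J$, $R=\textup{Eval}(\mathcal{C},I)\setminus J$, sets $\textup{Eval}(\rho,I)$ via Proposition \ref{fact:assumption} in the other), used respectively to import the lower bound of \cite{carr00} and the upper bound of \cite{PELEG200755}. The only difference is the order of presentation and some extra care about feasibility and parameter bookkeeping, which matches the paper's intent.
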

\begin{proof}\emph{(Sketch)}~
 We  show that \optruleselect{FP} reduces to \redblue via a $\mathcal L$-reduction with $\alpha=1$ and $\beta = 1$.   Given an instance $K=({\mathcal C}, (I,J))$ of \optruleselect{FP}, we construct  the following instance $K'=(R,B,{\mathcal S})$ of \redblue.

We put $B = J$
 and $R = \mbox{\eval{{\mathcal C}}{I}}\setminus J$.
 Thus, the blue elements are the facts of $J$, while the red elements are the facts of \eval{{\mathcal C}}{I} that are not in $J$.
 We form the collection   $\mathcal{S}$ consisting of all sets \eval{r}{I}, where $r$ is a rule in $\mathcal C$.
\eat{
By Proposition \ref{fact:assumption}, \eval{{\mathcal C}}{I} is polynomial-time computable in the size of $K$, so this is a polynomial-time reduction.}

\eat{
There is a 1-1 correspondence between feasible solutions for $K$ with $p$ positive errors and  feasible solutions for $K'$ that cover $p$ red elements. In particular, the optimum value of $K$ coincides with the optimum value of $K'$. Consequently,}
 \looseness = -1 It is easy to see that this is  a $\mathcal L$-reduction with $\alpha=1$ and $\beta = 1$.
 Thus, every approximation algorithm for \redblue gives rise to an approximation algorithm for \optruleselect{FP} with the same approximation factor.
In \cite{PELEG200755}, a polynomial-time algorithm with approximation factor of
$2\sqrt{|{\cal S}|\log{\beta}}$ for \redblue is described, where $\cal S$ is the given collection of sets and $\beta$ is the number of blue elements. This yields a polynomial-time algorithm for \optruleselect{FP} with an approximation factor of $2\sqrt{|{\cal C}|\log{|J|}}$.
 \eat{where $|\cal C|$ is the number of input rules and $|J|$ is the size of the input conclusion instance $J$.}
\eat{
Note that the size $|I|$ of the input premise structure does not appear in the approximation factor for \optruleselect{FP} because the number of red elements does not appear in the approximation factor for \redblue.}


Next, we show that \redblue reduces to \optruleselect{FP} via  a $\mathcal L$-reduction with $\alpha=1$ and $\beta = 1$.  Given an instance $K'=(R,B, {\mathcal S})$ of \redblue, we construct the following instance $K=({\mathcal C}, (I,J))$  of \optruleselect{FP}.

Let ${\mathcal S}=\{S_1,\ldots, S_m\}$. The premise schema  consists of unary relation symbols $Set_i$, $1\leq i\leq m$, and the conclusion schema consists of a unary relation symbol $R$.
    We put ${\mathcal C}=\{Set_i(x)\rightarrow R(x): 1\leq i\leq m\}$.
 We construct the premise instance $I$ by putting $Set^I_i=S_i$, $1\leq i\leq m$;  we construct the conclusion instance $J$ by putting $R^J = B$.

\eat{
There is a 1-1 correspondence between feasible solutions for $K'$ that cover $p$ red elements  and  feasible solutions for $K$ with $p$ false positive errors. In particular, the optimum value of $K'$ coincides with the optimum value of $K$. Consequently,}
 It is easy to see that this is a $\mathcal L$-reduction with $\alpha=1$ and $\beta = 1$. Thus,
 inapproximability results  for \redblue transfer to inapproximability results for  \optruleselect{FP}. In \cite{carr00}, it was shown that, unless P=NP,
for every $\epsilon > 0$,  there is no polynomial time algorithm that approximates \redblue    within a factor of $2^{\log^{1-\epsilon}({|\mathcal{S}|})}$. Thus, unless {\rm P=NP}, for every $\epsilon > 0$,  there is no polynomial time algorithm that approximates \textsc{\optruleselect{FP}}   within a factor of $2^{\log^{1-\epsilon}({|\mathcal{C}|})}$.
%
\end{proof}

 Finally, we bring into the picture the \pscproblem~(\psc)~problem, a  variant of \textsc{Min Set Cover} that will turn out to have the same approximation properties as \optruleselect{FP+FN}.
The  \psc~problem has been studied in \cite{MIETTINEN2008219}.

\begin{definition}\rm{
[{\small \pscproblem}]\\
\underline{Input:} Two disjoint sets $P$ and $N$ of ``positive" and ``negative" elements, and a collection $\mathcal{S}=\{S_1,\dots,S_m\}$ of subsets of $P\cup N$.\\
\underline{Goal:} Find a subset $\mathcal{S}^*\subseteq \mathcal{S}$ that minimizes the sum of the number of uncovered positive elements and the number of covered negative elements, i.e.,  the quantity
\[
\small
\text{cost}(\mathcal{S}^*) = |P\setminus (\cup \mathcal{S}^*)| + |N \cap (\cup \mathcal{S}^*)|.
\]}
\end{definition}

\begin{theorem}
\label{thm:ruleselectfpfn_approx}
Let $a$ and $r$ be two fixed positive integers.
\textsc{\optruleselect{FP+FN}} and \textsc{\psc} have the same approximation properties.
Consequently, the following hold true for \textsc{\optruleselect{FP+FN}}.
\begin{itemize}
    \item  \textsc{\optruleselect{FP+FN}} is approximable  within a factor of $2\sqrt{(|\mathcal{C}|+|J|)\log{|J|}}$, where $|\cal C|$ is the number of input rules and $|J|$ is the size of the input conclusion instance $J$.
    \item Unless {\rm P=NP}, for every $\epsilon > 0$,  there is no polynomial time algorithm that approximates \textsc{\optruleselect{FP+FN}}   within a factor of $2^{\log^{1-\epsilon}({|J|})}$, where $|J|$ is as above.
\end{itemize}
\end{theorem}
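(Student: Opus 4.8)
The plan is to mirror the proof of Theorem~\ref{thm:ruleselectfp_approx}, establishing $\mathcal L$-reductions with $\alpha=1$ and $\beta=1$ in both directions between \optruleselect{FP+FN} and \psc, and then reading off the known approximation bounds for \psc. The guiding intuition is that \psc replaces the hard covering requirement of \redblue by a \emph{penalty} for each uncovered positive element, which is exactly the role that false negatives play once the $\mathrm{FN}=0$ constraint of \optruleselect{FP} is dropped. Thus the whole argument should run in parallel to the \redblue case, with the positive elements $P$ playing the role of $J$ and the negative elements $N$ playing the role of the unwanted facts of $\mbox{\eval{\mathcal C}{I}}\setminus J$.

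First I would reduce \optruleselect{FP+FN} to \psc. Given an instance $K=(\mathcal C,(I,J))$, I set $P=J$ and $N=\mbox{\eval{\mathcal C}{I}}\setminus J$, and I let $\mathcal S=\{\mbox{\eval{r}{I}}:r\in\mathcal C\}$, indexed by the rules of $\mathcal C$; by Proposition~\ref{fact:assumption} these are all computable in polynomial time because $a$ and $r$ are fixed. For a subset $\mathcal C'\subseteq\mathcal C$ and the associated $\mathcal S^*$, we have $\cup\mathcal S^*=\mbox{\eval{\mathcal C'}{I}}$, so the number of uncovered positives is $|J\setminus\mbox{\eval{\mathcal C'}{I}}|=\mathrm{FN}$. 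For the covered negatives, monotonicity gives $\mbox{\eval{\mathcal C'}{I}}\subseteq\mbox{\eval{\mathcal C}{I}}$, whence $N\cap(\cup\mathcal S^*)=\mbox{\eval{\mathcal C'}{I}}\setminus J=\mathrm{FP}$. Hence $\mathrm{cost}(\mathcal S^*)=\error{FP+FN}{\mathcal C'}{(I,J)}$ exactly; the correspondence $\mathcal C'\leftrightarrow\mathcal S^*$ is a bijection between feasible solutions, the optima coincide, and so this is an $\mathcal L$-reduction with $\alpha=\beta=1$.

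For the converse, I would reduce \psc to \optruleselect{FP+FN} just as in the \redblue construction. Given $K'=(P,N,\mathcal S)$ with $\mathcal S=\{S_1,\dots,S_m\}$, I take unary premise symbols $Set_1,\dots,Set_m$, a unary conclusion symbol $B$, rules $\mathcal C=\{Set_i(x)\rightarrow B(x):1\le i\le m\}$, premise instance $Set_i^I=S_i$, and conclusion instance $B^J=P$. Then $\mbox{\eval{\mathcal C'}{I}}=\cup\mathcal S^*$, and since $P$ and $N$ are disjoint with $\cup\mathcal S^*\subseteq P\cup N$, the false positives are $(\cup\mathcal S^*)\setminus P=N\cap(\cup\mathcal S^*)$ (the covered negatives) and the false negatives are $P\setminus(\cup\mathcal S^*)$ (the uncovered positives). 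Thus $\error{FP+FN}{\mathcal C'}{(I,J)}=\mathrm{cost}(\mathcal S^*)$, giving again an $\mathcal L$-reduction with $\alpha=\beta=1$; note that these rules have a single premise atom and a unary conclusion, so the reduction lands in the parameterized problem for every $a,r\ge 1$.

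Finally I would read off the two bounds. By Proposition~2 in~\cite{PAPADIMITRIOU1991}, $\mathcal L$-reductions with $\alpha=\beta=1$ preserve approximation factors, so \optruleselect{FP+FN} and \psc have the same approximation properties. The upper bound follows from the polynomial-time $2\sqrt{(|\mathcal S|+|P|)\log|P|}$-approximation for \psc in~\cite{MIETTINEN2008219}: under the first reduction $|\mathcal S|=|\mathcal C|$ and $|P|=|J|$, which yields the factor $2\sqrt{(|\mathcal C|+|J|)\log|J|}$. The lower bound follows from the inapproximability of \psc within $2^{\log^{1-\epsilon}(|P|)}$ (established in~\cite{MIETTINEN2008219} via \redblue and~\cite{carr00}): under the second reduction $|J|=|P|$, so unless $\mathrm{P}=\mathrm{NP}$ no polynomial-time algorithm approximates \optruleselect{FP+FN} within $2^{\log^{1-\epsilon}(|J|)}$. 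The one point that genuinely goes beyond the \redblue argument---and the step I expect to require the most care---is pinning down the exact form of the \psc bounds in~\cite{MIETTINEN2008219}, since the $|\mathcal C|+|J|$ under the square root (rather than the bare $|\mathcal C|$ of the FP case) comes from the way the approximation algorithm for \psc must also charge for uncovered positives; the monotonicity identity $N\cap(\cup\mathcal S^*)=\mathrm{FP}$ in the first reduction is the other place where a little care is needed.
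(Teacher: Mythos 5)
Your proposal is correct and follows essentially the same route as the paper: the paper's proof is exactly the pair of $\mathcal L$-reductions with $\alpha=\beta=1$ obtained by transplanting the \redblue constructions from Theorem~\ref{thm:ruleselectfp_approx} (with $P=J$, $N=\mbox{\eval{\mathcal C}{I}}\setminus J$, and sets $\mbox{\eval{r}{I}}$ in one direction, and the $Set_i(x)\rightarrow B(x)$ encoding in the other), followed by reading off the upper bound $2\sqrt{(|\mathcal S|+|P|)\log|P|}$ and the $2^{\log^{1-\epsilon}(|P|)}$ lower bound for \psc from \cite{MIETTINEN2008219}. Your added verifications (the monotonicity identity $N\cap(\cup\mathcal S^*)=\mathrm{FP}$ and the exact equality of cost and error) are exactly the details the paper leaves implicit.
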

\begin{proof} \emph{(Hint)}~
 \looseness = -1
 The $\mathcal L$-reductions between  \optruleselect{FP+FN} and \psc are the same as those between \optruleselect{FP} and \redblue in the proof of Theorem \ref{thm:ruleselectfp_approx}, but with the positive elements playing the role of the red elements and with the negative elements that  of the negative elements.
\eat{
In \cite{MIETTINEN2008219}, it was shown that \psc has a polynomial-time algorithm with approximation factor of $2\sqrt{(|\mathcal{Q}|+|P|)\log{|P|}}$, where $|\mathcal{Q}|$ is the size of the collection of sets and $|P|$ is the number of positive elements.  It was also shown there that, unless P=NP, for every $\epsilon >0$, there is no polynomial time algorithm that approximates \psc   within a factor of $2^{\log^{1-\epsilon}({|P|})}$.}
 The upper and lower bounds for the approximability of \optruleselect{FP+FN} follow from such bounds for \psc.
\eat{
We first show that  \optruleselect{FP+FN} reduces to \psc via a $\mathcal L$-reduction with $\alpha=1$ and $\beta = 1$.  Given an instance $K=({\mathcal C}, (I,J))$ of \optruleselect{FP+FN}, we construct the following instance $K'=(P,N,{\mathcal C})$ of \psc.
\begin{itemize}
\item We put $P=J$ and $N=\mbox{\eval{{\mathcal C}}{I}}\setminus J$.
 \item We form the collection $\mathcal Q$ consisting of all sets $\mbox{\eval{r}{I}}$, where $r$ is a rule in $\mathcal C$.
\end{itemize}
Using a reasoning similar to the one in the proof of Theorem \ref{thm:ruleselectfp_approx}, we conclude that every approximation algorithm for \psc gives rise to an approximation algorithm for  \optruleselect{FP+FN} with the same approximation factor. In \cite{MIETTINEN2008219}, it was shown that \psc has a polynomial-time algorithm with approximation factor of $2\sqrt{(|\mathcal{Q}|+|P|)\log{|P|}}$. It was also shown that, unless P=NP, for every $\epsilon >0$,

To show \psc $\mathcal{L}$-reduces to \optruleselect{FP+FN}, we assume that we are given an instance $K$ of \psc consisting of
a set $N$ of negative elements,  a set $P$ of positive elements, and a collection $\mathcal{S}$ of subsets of $N\cup P$.
We construct an instance $K'$ of \optruleselect{FP+FN} as follows:
\squishlist
\item we let $\mathcal{C}$ be a set consisting of constraints $Set_i(x)\rightarrow Pos(x)$ for all $S_i\in\mathcal{S}$;
\item we let $I= Set^I_i\cup, \dots, \cup ~Set^I_\mathcal{S}$ be the premise instance, where $Set^I_i=\{Set_i(e) | e\in S_i\}$;
\item we let $J$ be the conclusion instance consisting of facts $Pos(p)$ for all $p\in P$.
\squishend
Then, we can show that there is a feasible solution for $K$ with cost $w$ iff there is a feasible solution for $K'$ with $w$ total errors. Moreover, given a feasible solution for $K$ with cost
$w$ we can compute a feasible solution for $K'$ with $w$ total errors in linear time, and vice versa.}
\end{proof}

%
%
%
%
%
%
%
%
%
%
%
%
%

\section{Bi-Objective and Bi-Level Minimization}
\label{section:pareto}
\looseness = -1 The optimization problems \optruleselect{FP} and \optruleselect{FP+FN}  have a single objective, namely,  the minimization of the error. Thus, all feasible solutions of minimum error are optimal, even though they may   differ in size.  What if one is also interested in taking the size of the solution into account? Since error and size are qualitatively incomparable quantities,  it is not meaningful to combine them into a single objective function by taking, say, a linear combination of the two. Instead, we can cast the problem as a bi-objective optimization problem and study the \emph{Pareto optimal solutions} that strike the right balance between error and size by capturing the trade-offs between these two quantities.

If $(a,b)$ and $(c,d)$ are two  pairs of integers, then $(a,b) \leq (c,d)$ denotes that $a\leq c$ and $b\leq d$, while $(a,b) < (c,d)$ denotes that $(a,b) \leq (c,d)$ and $(a,b) \not = (c,d)$.

\begin{definition}\rm{[Pareto Optimal Solution and Pareto Front]\\
Let $a$ and $r$ be two fixed positive integers and
let $K=(\mathcal{C}, (I,J))$ be an instance of \optruleselect{FP+FN}.
 \begin{itemize}
 \item  A \emph{Pareto optimal solution for $K$} is a subset $\mathcal {C^*}$ of $\mathcal C$ for which there is no subset $\mathcal C' \subseteq \mathcal C$  such that
      $(e',|{\mathcal C'}|) < (e^*, |{\mathcal {C^*}}|)$, where
      $e'= \error{FP+FN}{\mathcal C'}{(I,J)}$ and $e^* =  \error{FP+FN}{{\mathcal {C^*}}}{(I,J)}$.

  \item The \emph{Pareto front of $K$} is the set of all pairs  $(e^*,s^*)$ of integers such that there is a Pareto optimal solution $\mathcal {C^*}$ for $K$ with  $\error{FP+FN}{{\mathcal {C^*}}}{(I,J)})=e^*$ and
          $|{\mathcal C^*}|=s^*$.
     \end{itemize}
    }
     \end{definition}

%

The preceding notions of Pareto optimal solution and Pareto membership front give rise to natural decision problems. In what follows, we give the definitions for the parameterized versions of these problems.

\begin{definition}\rm{[Pareto Optimal Solution Problem]
\label{def:paretooptimality}\\
Let $a$ and $r$ be two fixed positive integers.

The \paretooptimality{FP+FN} problem asks:
given an instance $K=(\mathcal{C}, (I,J))$ of \optruleselect{FP+FN},
 and a subset  $\mathcal{C}^*$ of  $\mathcal{C}$,
is $\mathcal{C}^*$ a Pareto optimal solution  for $K$?}
\end{definition}

\begin{definition}\rm{[Pareto Front Membership Problem]
\label{def:paretomembership}~\\
Let $a$ and $r$ be two fixed positive integers.

The \paretofrontmembership{FP+FN} problem asks:  given an instance $K=(\mathcal{C}, (I,J))$ of \optruleselect{FP+FN} and  a pair $(e,s)$ of integers, is  $(e,s)$ on the Pareto front of $K$?}
\end{definition}

\eat{The notions of a Pareto optimal solution and Pareto front of an instance of \optruleselect{FP}, as well as}
The decision problems \paretooptimality{FP} and \paretofrontmembership{FP} are defined in an analogous manner.

\begin{theorem} \label{thm:pareto}
Let $a$ and $r$ be two fixed positive integers.
The following statements are true.
\begin{itemize}
\item The decision problems \textsc{\paretooptimality{FP}} and \textsc{\paretooptimality{FP+FN}} are {\rm coNP}-complete.
    \item  The decision problems \textsc{\paretofrontmembership{FP}} and \textsc{\paretofrontmembership{FP+FN}} are {\rm DP}-complete.
        \end{itemize}
        \end{theorem}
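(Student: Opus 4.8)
The plan is to settle the two upper bounds first and then the matching hardness results, using the clean false-positive case as the workhorse and reducing the FP+FN case to it by a false-negative amplification trick.

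\textbf{Memberships.} For \paretooptimality{FP+FN}, note that a candidate $\mathcal{C}^*$ is \emph{not} Pareto optimal precisely when some $\mathcal{C}'\subseteq\mathcal{C}$ dominates it, i.e. $(e',|\mathcal{C}'|)<(e^*,|\mathcal{C}^*|)$ with $e'=\error{FP+FN}{\mathcal C'}{(I,J)}$. Since $a$ and $r$ are fixed, Proposition \ref{fact:assumption} lets us compute $e'$ and $|\mathcal{C}'|$ in polynomial time for any guessed $\mathcal{C}'$, so the complement is in NP and the problem is in coNP. For \paretofrontmembership{FP+FN}, observe that $(e,s)$ lies on the Pareto front iff it is (i) \emph{achievable}, i.e. some $\mathcal{C}'$ has error $e$ and size $s$, and (ii) \emph{not dominated}, i.e. no $\mathcal{C}''$ satisfies $(e'',|\mathcal{C}''|)<(e,s)$. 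Property (i) is in NP and (ii) is in coNP, so membership is the conjunction of an NP and a coNP problem and hence lies in DP. Both arguments apply verbatim to the FP versions, where the extra requirement that the solutions be covers is itself polynomial-time checkable.

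\textbf{DP-hardness of Pareto Front Membership.} I would reduce from \exactsetcover, which is DP-hard by the proof of Theorem \ref{thm:exactruleselect_dpc}. Reusing the construction of Theorem \ref{thm:ruleselect_npc}, namely the rules $Set_i(x)\rightarrow B(x)$ with $Set_i^I=S_i\cup\{a_i\}$ and $B^J=U$, every feasible FP-solution is a cover, its number of false positives is exactly the number $|\mathcal{C}'|$ of selected rules (one fresh $a_i$ per rule), and this also equals its size. Thus every feasible solution sits on the diagonal and the Pareto front is the single point $(\mathrm{OPT},\mathrm{OPT})$, where $\mathrm{OPT}$ is the minimum cover size; hence $(k,k)$ is on the front iff $\mathrm{OPT}=k$, which is exactly \exactsetcover. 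For the FP+FN version I would replace each element $u$ by $M>|\mathcal C|$ identical copies that each rule for a set containing $u$ produces together. Leaving any element uncovered then costs more than $|\mathcal{C}|$ false negatives, so every minimum-error solution is a cover; covers again lie on the diagonal (false positives $=$ size $=$ number of rules), and $(k,k)$ once more encodes $\mathrm{OPT}=k$ for $k\le|\mathcal C|$ (larger $k$ map to a fixed ``no'' instance).

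\textbf{coNP-hardness of Pareto Optimal Solution.} The same diagonal picture shows that, in the FP construction, a cover $\mathcal{C}^*$ is Pareto optimal iff no cover is strictly smaller, i.e. iff $\mathcal{C}^*$ is a \emph{minimum-cardinality} cover. So \paretooptimality{FP} is the \setcoveroptimality problem (given a cover, is it minimum?), whose complement ``a strictly smaller cover exists'' I would show NP-hard by reduction from \setcover; the FP+FN case then follows by the false-negative amplification above. The hard part---and the reason this needs more than the diagonal observation---is that such a reduction must produce, in polynomial time, a cover whose size sits exactly at the decision threshold: to encode ``is there a cover of size $\le k$'' one wants a writable cover of size $k+1$ that is minimum iff no smaller cover exists. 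Since a minimum cover can be arbitrarily larger than $k+1$, this forces the construction to \emph{cap} the optimum at $k+1$ while leaving the ``$\le k$'' question intact. I plan to do this with a short-circuit gadget that adjoins an always-available cover of size $k+1$ on fresh elements that are also reachable from the original sets, arranged so that its pieces cannot be combined with the original sets to beat $\min(\mathrm{OPT},k+1)$; designing this gadget so that it neither forces itself into every cover nor accidentally creates a cover of size $\le k$ is exactly where I expect essentially all the difficulty to lie.
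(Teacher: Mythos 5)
Your membership arguments and your DP-hardness reductions for \paretofrontmembership{FP} and \paretofrontmembership{FP+FN} are correct and essentially the paper's own: the paper likewise decomposes Pareto-front membership into ``achievable'' (in NP) plus ``undominated'' (in coNP), reduces from \exactsetcover, and for the FP+FN case uses exactly your amplification trick (it adds $p$ clones of each element of $U$, so that any non-cover incurs at least $p+1>k$ false negatives and hence can neither realize the point $(k,k)$ nor dominate it). Your observation that the FP case needs no clones, because feasibility already forces covering and so every feasible solution lies on the diagonal, is a small but valid simplification.

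The genuine gap is exactly where you place it: the coNP-hardness of \paretooptimality{FP} and \paretooptimality{FP+FN}. The paper also routes this through \setcoveroptimality, but simply asserts that problem to be coNP-complete, so your proof still owes a hardness argument for it, and the gadget you sketch is unlikely to work as described. The obstruction is structural: for a covering (minimization) problem, adjoining a ``parallel'' cover of size $k+1$ does not make the new optimum $\min(\mathrm{OPT},k+1)$. If the fresh elements force all $k+1$ new sets into every cover, the optimum becomes $k+1$ unconditionally; if they do not, pieces of the gadget can be combined with original sets (one new block plus a few original sets covering the remainder) to produce covers strictly smaller than $\min(\mathrm{OPT},k+1)$ even when the original instance has no cover of size $k$. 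A reduction that does work abandons the arbitrary threshold $k$ and fixes the slack at one by starting from \textsc{3-Unsat}: given $\varphi$ with clauses $C_1,\dots,C_m$ over $x_1,\dots,x_n$, introduce a fresh variable $y$, replace each $C_j$ by $C_j\lor y$, and add the clause $(\lnot y)$; the resulting formula is satisfiable iff $\varphi$ is, while the assignment $y=\mathrm{true}$ falsifies exactly one clause. Encoding this formula as set cover in the standard way (universe the variables and clauses, a set $T_\ell=\{x_i\}\cup\{C_j:\ell\in C_j\}$ for each literal $\ell$ of $x_i$, plus a singleton per clause) gives an instance in which every cover has size at least $n+1$, a cover of size $n+1$ exists iff $\varphi$ is satisfiable, and the explicitly constructible cover consisting of $T_{x_1},\dots,T_{x_n}$, $T_y$, and the singleton for $(\lnot y)$ has size $n+2$ and is optimal iff $\varphi$ is unsatisfiable. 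Feeding this witness through your diagonal construction yields the coNP-hardness of \paretooptimality{FP}, and your false-negative amplification then lifts it to \paretooptimality{FP+FN}.
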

     \begin{proof} \emph{(Sketch)}
      \eat{
      The membership of the first two problems in coNP and the membership of the last two  in DP follow from the definition of each problem and Proposition   \ref{fact:assumption}.}
\looseness = -1 The coNP-hardness of the two Pareto optimality problems  is shown via  reductions from the coNP-complete problem \setcoveroptimality: given a set $U$, a collection ${\mathcal S}= \{S_1,\ldots,S_p\}$ of subsets of $U$ whose union is $U$, and a subset $\mathcal S'$ of $\mathcal S$, is $\mathcal S'$ an optimal cover of $U$?
The DP-hardness of the two Pareto front membership problems is shown via reductions from the \exactsetcover problem, encountered  in the proof of Theorem \ref{thm:exactruleselect_dpc}. In what follows,
we outline the reduction of \exactsetcover~to \paretofrontmembership{FP+FN}.

Let $U=\{u_1,\dots,u_m\}$, $\mathcal{S}=\{S_1, \dots, S_p\}$, $k$  be an instance of \exactsetcover.
For each $i$,  $1\leq i\leq p$, we introduce the rule $Set_i(x)\rightarrow B(x)$ and we let $\cal C$ be the set of all these rules.
%
We introduce $p$  new elements $a_1,\ldots, a_p$; furthermore, for every element $u_j\in U$, $1\leq j\leq m$, we introduce $p$ new elements $b_j^1,\ldots, b_j^p$, which can be thought of as ``clones" of $u_j$. We construct the premise instance
 $I$ with $Set^I_i=  S_i\cup \{a_i\}\cup \bigcup_{u_j\in S_i}\{b_j^1,\ldots, b_j^p\}$, $1\leq i\leq p$, that is, $Set^I_i$ consists of $a_i$, the elements of the set $S_i$, and all their clones.
    We  also construct the conclusion instance $J$ with $B^J = U\cup \bigcup_{u_j\in U} \{b_j^1,\ldots,b_j^p\}$, that is $B^J$ consists of the elements of $U$ and all their clones.
     Let $K= ({\mathcal C},  (I,J))$ be the resulting instance of \optruleselect{FP+FN}.

     We claim that the optimum size of the \setcover instance $(U,{\mathcal S})$ is $k$ if and only if the pair
     $(k,k)$ is on the Pareto front of the instance $K$. This claim follows by combining the following two facts. First, there is a 1-1 correspondence between covers $\mathcal {S'}$ of $U$ and subsets $\mathcal C'$ of $\mathcal C$ of the same size that have no negative errors with respect to $(I,J)$; moreover, for such subsets $\mathcal S'$, we have that $|{\mathcal S'}|= \error{FP+FN}{{\mathcal C'}}{(I,J)}$. Second, if a subset $\mathcal C'$ of $\mathcal C$ does not correspond to a cover of $U$, then
      $\error{FP+FN}{{\mathcal C'}}{(I,J)}\geq p+1> k$, since $k\leq p$ and,  in this case, there are at least $p+1$ false positives arising from an uncovered element of $U$ and its $p$-many clones.
\end{proof}

For bi-objective optimization problems, one would ideally like to efficiently produce the Pareto front of a given instance. For some such problems, this is impossible because the size (i.e., the number of points) of the Pareto front can be exponential in the size of a given instance. In the case of the Pareto front of rule selection problems, the size of the Pareto front is polynomial in the size of any given instance; nonetheless, Theorem \ref{thm:pareto} implies the following result.

\begin{corollary} \label{pareto:cor}
Let $a$ and $r$ be two fixed positive integers.
Unless \mbox{\rm{P=NP}}, there is no polynomial-time algorithm that, given an instance $K$ of
\textsc{\optruleselect{FP}} (or of \textsc{\optruleselect{FP+FN}}),  constructs the Pareto front of $K$.
\end{corollary}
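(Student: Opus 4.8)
The plan is to obtain the corollary as an immediate consequence of the DP-completeness of the Pareto front membership problems established in Theorem \ref{thm:pareto}, via a simple Turing reduction. The key observation, already noted above, is that for rule selection problems the Pareto front has polynomially many points: every point is a pair $(e,s)$ in which the size $s$ ranges over $\{0,1,\ldots,|{\mathcal C}|\}$, and for each value of $s$ at most one error value can appear on the front (if $(e_1,s)$ and $(e_2,s)$ both lay on the front with $e_1 < e_2$, then $(e_1,s) < (e_2,s)$, contradicting the Pareto optimality of the latter). Hence the front contains at most $|{\mathcal C}|+1$ points.

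I would argue by contradiction. Suppose there were a polynomial-time algorithm $A$ that, on input an instance $K$, outputs the Pareto front of $K$. I would use $A$ to decide \textsc{\paretofrontmembership{FP+FN}} in polynomial time as follows: given $K$ together with a pair $(e,s)$ of integers, first run $A$ on $K$ to obtain the list $L$ of all points on the Pareto front of $K$; since $A$ runs in polynomial time and the front has polynomial size, $L$ is produced in polynomial time and has polynomial size. Then simply test whether $(e,s)\in L$, which takes time linear in $|L|$. This yields a polynomial-time decision procedure for \textsc{\paretofrontmembership{FP+FN}}.

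By Theorem \ref{thm:pareto}, however, \textsc{\paretofrontmembership{FP+FN}} is DP-complete and hence, in particular, NP-hard; a polynomial-time decision procedure for it would therefore imply P=NP. The very same argument, using the DP-hardness of \textsc{\paretofrontmembership{FP}}, handles the case of \textsc{\optruleselect{FP}}. Consequently, unless P=NP, no polynomial-time algorithm can construct the Pareto front of a given instance, which is precisely the assertion of the corollary.

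I expect no genuine obstacle here: the single point requiring care is that the output of the hypothetical construction algorithm can be searched in polynomial time, which is guaranteed by the polynomial bound on the number of points of the Pareto front. All of the real difficulty has already been absorbed into Theorem \ref{thm:pareto}.
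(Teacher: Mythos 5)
Your proposal is correct and follows essentially the same route as the paper's own (brief) argument: assume a polynomial-time construction algorithm, use it together with a membership test on the polynomially sized Pareto front to decide \textsc{\paretofrontmembership{FP+FN}} (resp.\ \textsc{\paretofrontmembership{FP}}) in polynomial time, and derive a contradiction with the NP-hardness implied by Theorem~\ref{thm:pareto}. Your explicit justification that the front has at most $|\mathcal{C}|+1$ points is a welcome detail that the paper only asserts in passing.
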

\eat{
\begin{proof} If such an algorithm existed,  the decision problem  \paretofrontmembership{FP} (or  \paretofrontmembership{FP+FN}) would be solvable in polynomial time by constructing the Pareto front of the given instance $K$ and  checking whether or not the  given pair $(e,s)$ of integers is on the Pareto front.
\end{proof}
}

We conclude this section by identifying the complexity of the following two bi-level minimization problems  in which one minimizes first for error and then for size.

\begin{definition}\rm{[Bi-level Optimal Solution Problem]
\label{def:bileveloptimalsolution}\\
Let $a$ and $r$ be two fixed positive integers.

The \bileveloptimalsolution{FP+FN} problem asks:
given an instance $K=(\mathcal{C}, (I,J))$ of \optruleselect{FP+FN},
 and a subset  $\mathcal{C}^*$ of  $\mathcal{C}$,
is $\mathcal{C}^*$ a {\em bi-level optimal} solution? (i.e., is $\mathcal{C}^*$ an optimal solution of  \optruleselect{FP+FN}} that also has
minimal size among all such optimal solutions?)
\end{definition}

\begin{definition}\rm{[Bi-level Optimal Value Problem]
\label{def:bileveloptimalvalue}~\\
Let $a$ and $r$ be two fixed positive integers.

The \bileveloptimalvalue{FP+FN} problem asks:  given an instance $K=(\mathcal{C}, (I,J))$ of \optruleselect{FP+FN} and  a pair $(e,s)$ of integers,
is  $e$ the error of a bi-level optimal solution  and is $s$ its size?}
\end{definition}

\looseness=-1 The decision problems \bileveloptimalsolution{FP} and \bileveloptimalvalue{FP} are defined in an analogous manner. Note that if a pair $(e,s)$ is
a bi-level optimal value, then it is a point on the Pareto front. Moreover, $(e,s)$ is a special point because $e$ is the minimum possible error and $s$
is the minimum size of subsets of the given set of rules having this minimum error $e$.

The following result can be obtained by analyzing the proof of Theorem \ref{thm:pareto}.

\begin{corollary} \label{corollar:bilevel}
Let $a$ and $r$ be two fixed positive integers.
The following statements are true.
\begin{itemize}
\item The decision problems \textsc{\bileveloptimalsolution{FP}} and \textsc{\bileveloptimalsolution{FP+FN}} are {\rm coNP}-complete.
    \item  The decision problems \textsc{\bileveloptimalvalue{FP}} and \textsc{\bileveloptimalvalue{FP+FN}} are {\rm DP}-complete.
        \end{itemize}
        \end{corollary}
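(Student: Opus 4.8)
\emph{(Sketch)}
The plan is to reuse the gadget constructions from the proof of Theorem~\ref{thm:pareto}, supplemented by routine membership arguments based on Proposition~\ref{fact:assumption}.

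I would first settle membership. For the bi-level optimal solution problems, note that a subset $\mathcal{C}^*$ \emph{fails} to be bi-level optimal precisely when some $\mathcal{C}'\subseteq\mathcal{C}$ is lexicographically better, that is, either $\error{FP+FN}{\mathcal{C}'}{(I,J)}<\error{FP+FN}{\mathcal{C}^*}{(I,J)}$, or the two errors are equal while $|\mathcal{C}'|<|\mathcal{C}^*|$. By Proposition~\ref{fact:assumption} the error and the size of any guessed $\mathcal{C}'$ are computable in polynomial time, so this complementary condition is in NP and the solution problems lie in coNP. For the bi-level optimal value problems, I would express the statement that $(e,s)$ is the bi-level optimal value as the conjunction of an NP part, namely that there exists a $\mathcal{C}'$ with error $e$ and size $s$, and a coNP part, namely that no $\mathcal{C}'$ is lexicographically smaller than $(e,s)$; this places these problems in DP. The same arguments apply to the \optruleselect{FP} versions, where one additionally checks feasibility when guessing $\mathcal{C}'$.

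Next I would establish hardness by recycling the clone construction used for \paretofrontmembership{FP+FN} in the proof of Theorem~\ref{thm:pareto}. Given a \setcover instance $(U,\mathcal{S})$ with $|\mathcal{S}|=p$, that construction yields an instance $K=(\mathcal{C},(I,J))$ with two key properties: a subset $\mathcal{C}'$ corresponding to a cover $\mathcal{S}'$ has no false negatives and satisfies $\error{FP+FN}{\mathcal{C}'}{(I,J)}=|\mathcal{S}'|=|\mathcal{C}'|$, whereas every subset that is \emph{not} a cover incurs at least $p+1$ errors, since each uncovered element of $U$ together with its $p$ clones becomes a false negative. As the minimum cover size $k^*$ satisfies $k^*\le p<p+1$, the minimum error over all subsets equals $k^*$ and is attained exactly by the subsets corresponding to minimum covers, each of which has size $k^*$. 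Consequently the bi-level optimal value of $K$ is the single point $(k^*,k^*)$. This gives a reduction from \exactsetcover to \bileveloptimalvalue{FP+FN}: the minimum cover size of $(U,\mathcal{S})$ equals $k$ if and only if $(k,k)$ is the bi-level optimal value of $K$, so the latter is DP-hard. For the coNP-hardness of \bileveloptimalsolution{FP+FN} I would instead reduce from \setcoveroptimality, mapping a candidate $\mathcal{S}'$ to the corresponding rule subset $\mathcal{C}^*$; by the same properties, $\mathcal{C}^*$ attains the minimum error, and is therefore bi-level optimal, if and only if $\mathcal{S}'$ is a minimum cover. The \optruleselect{FP} versions follow from analogous constructions, where the requirement that $J$ be contained in the evaluation of the selected rules already forces the feasible subsets to be covers.

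The step I expect to be the main obstacle is verifying that the bi-level optimum of the constructed instance is pinned to the \emph{single} point $(k^*,k^*)$, rather than merely lying somewhere on the Pareto front. This is exactly where the clone gadget does the work: by inflating the false-negative penalty of every uncovered element to $p+1$, it forces all minimum-error subsets to be precisely the minimum covers, so that their common cardinality $k^*$ is automatically the minimum size among minimum-error solutions. Once this coincidence is checked, the equivalences for both the value and the solution problems follow directly from those already proved for Theorem~\ref{thm:pareto}.
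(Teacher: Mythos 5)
Your proposal is correct and takes exactly the route the paper intends: the paper's entire proof of this corollary is the remark that it ``can be obtained by analyzing the proof of Theorem~\ref{thm:pareto},'' and your analysis---reusing the clone gadget so that the only minimum-error subsets are those corresponding to minimum covers, which pins the bi-level optimum to the single point $(k^*,k^*)$---is precisely that analysis, supplemented by the routine coNP/DP membership arguments via Proposition~\ref{fact:assumption}. You also correctly identify the one point the paper leaves implicit, namely that without the clones a non-cover could tie the minimum error, so no gaps remain.
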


\section{Concluding Remarks}
\looseness = -1 We carried out a systematic complexity-theoretic investigation of  rule selection problems from several different angles, including an exploration of their complexity when they are cast as bi-objective optimization problems. A natural next step  is the implementation and experimental evaluation of the approximation algorithms for
\optruleselect{FP} and \optruleselect{FP+FN} based on corresponding approximation algorithms for \redblue and \pscproblem. Note that the approximation algorithms for the latter problems have been used
   for blocking function selection in entity resolution \cite{DBLP:conf/icdm/BilenkoKM06} and in view selection \cite{DBLP:conf/icdt/SarmaPGW10}. A different next step is to leverage the large literature on bi-objective optimization, including \cite{DBLP:conf/tacas/LegrielGCM10}, and design  heuristic algorithms for approximating the Pareto front of rule selection problems.

\paragraph{Acknowledgment} The research of Phokion Kolaitis was partially supported by NSF Grant IIS:1814152.
\bibliography{bib}

\begin{thebibliography}{}

\bibitem[\protect\citeauthoryear{Agrawal, Rantzau, and
  Terzi}{2006}]{DBLP:conf/sigmod/AgrawalRT06}
Agrawal, R.; Rantzau, R.; and Terzi, E.
\newblock 2006.
\newblock Context-sensitive ranking.
\newblock In Chaudhuri, S.; Hristidis, V.; and Polyzotis, N., eds., {\em
  Proceedings of the {ACM} {SIGMOD} International Conference on Management of
  Data, Chicago, Illinois, USA, June 27-29, 2006},  383--394.
\newblock {ACM}.

\bibitem[\protect\citeauthoryear{Alexe \bgroup et al\mbox.\egroup
  }{2011a}]{DBLP:journals/tods/AlexeCKT11}
Alexe, B.; {\SortNoop{Cate}}ten~Cate, B.; Kolaitis, P.~G.; and Tan, W.~C.
\newblock 2011a.
\newblock Characterizing schema mappings via data examples.
\newblock {\em {ACM} Trans. Database Syst.} 36(4):23:1--23:48.

\bibitem[\protect\citeauthoryear{Alexe \bgroup et al\mbox.\egroup
  }{2011b}]{DBLP:conf/sigmod/AlexeCKT11}
Alexe, B.; {\SortNoop{Cate}}ten~Cate, B.; Kolaitis, P.~G.; and Tan, W.~C.
\newblock 2011b.
\newblock Designing and refining schema mappings via data examples.
\newblock In Sellis, T.~K.; Miller, R.~J.; Kementsietsidis, A.; and Velegrakis,
  Y., eds., {\em Proceedings of the {ACM} {SIGMOD} International Conference on
  Management of Data, {SIGMOD} 2011, Athens, Greece, June 12-16, 2011},
  133--144.
\newblock {ACM}.

\bibitem[\protect\citeauthoryear{Arasu, Re, and Suciu}{2009}]{Dedupalog09}
Arasu, A.; Re, C.; and Suciu, D.
\newblock 2009.
\newblock {Large-Scale Deduplication with Constraints using Dedupalog}.
\newblock In {\em ICDE},  952--963.

\bibitem[\protect\citeauthoryear{Arora and Barak}{2009}]{AroraB09}
Arora, S., and Barak, B.
\newblock 2009.
\newblock {\em Computational Complexity - {A} Modern Approach}.
\newblock Cambridge University Press.

\bibitem[\protect\citeauthoryear{Bach \bgroup et al\mbox.\egroup
  }{2017}]{DBLP:journals/jmlr/BachBHG17}
Bach, S.~H.; Broecheler, M.; Huang, B.; and Getoor, L.
\newblock 2017.
\newblock Hinge-loss markov random fields and probabilistic soft logic.
\newblock {\em Journal of Machine Learning Research} 18:109:1--109:67.

\bibitem[\protect\citeauthoryear{Bilenko, Kamath, and
  Mooney}{2006}]{DBLP:conf/icdm/BilenkoKM06}
Bilenko, M.; Kamath, B.; and Mooney, R.~J.
\newblock 2006.
\newblock Adaptive blocking: Learning to scale up record linkage.
\newblock In {\em Proceedings of the 6th {IEEE} International Conference on
  Data Mining {(ICDM} 2006), 18-22 December 2006, Hong Kong, China},  87--96.
\newblock {IEEE} Computer Society.

\bibitem[\protect\citeauthoryear{Bonifati \bgroup et al\mbox.\egroup
  }{2017}]{Bonifati:2017:IMS:3035918.3064028}
Bonifati, A.; Comignani, U.; Coquery, E.; and Thion, R.
\newblock 2017.
\newblock Interactive mapping specification with exemplar tuples.
\newblock In {\em Proceedings of the 2017 ACM International Conference on
  Management of Data}, SIGMOD '17,  667--682.
\newblock New York, NY, USA: ACM.

\bibitem[\protect\citeauthoryear{Burdick \bgroup et al\mbox.\egroup
  }{2016}]{BurdickFKPT16}
Burdick, D.; Fagin, R.; Kolaitis, P.~G.; Popa, L.; and Tan, W.
\newblock 2016.
\newblock {A Declarative Framework for Linking Entities}.
\newblock {\em {ACM} Trans. Database Syst.} 41(3):17.
\newblock Preliminary version appeared in ICDT, pages 25--43, 2015.

\bibitem[\protect\citeauthoryear{Carr \bgroup et al\mbox.\egroup
  }{2000}]{carr00}
Carr, R.~D.; Doddi, S.; Konjevod, G.; and Marathe, M.
\newblock 2000.
\newblock On the red-blue set cover problem.
\newblock In {\em Proceedings of the Eleventh Annual ACM-SIAM Symposium on
  Discrete Algorithms}, SODA '00,  345--353.
\newblock Philadelphia, PA, USA: Society for Industrial and Applied
  Mathematics.

\bibitem[\protect\citeauthoryear{{\SortNoop{Cate}}ten~Cate \bgroup et
  al\mbox.\egroup }{2017}]{DBLP:journals/tods/CateKQT17}
{\SortNoop{Cate}}ten~Cate, B.; Kolaitis, P.~G.; Qian, K.; and Tan, W.
\newblock 2017.
\newblock Approximation algorithms for schema-mapping discovery from data
  examples.
\newblock {\em {ACM} Trans. Database Syst.} 42(2):12:1--12:41.

\bibitem[\protect\citeauthoryear{{\SortNoop{Cate}}ten~Cate \bgroup et
  al\mbox.\egroup }{2018}]{DBLP:conf/pods/CateK0T18}
{\SortNoop{Cate}}ten~Cate, B.; Kolaitis, P.~G.; Qian, K.; and Tan, W.
\newblock 2018.
\newblock Active learning of {GAV} schema mappings.
\newblock In den Bussche, J.~V., and Arenas, M., eds., {\em Proceedings of the
  37th {ACM} {SIGMOD-SIGACT-SIGAI} Symposium on Principles of Database Systems,
  Houston, TX, USA, June 10-15, 2018},  355--368.
\newblock {ACM}.

\bibitem[\protect\citeauthoryear{{\SortNoop{Cate}}ten~Cate, Dalmau, and
  Kolaitis}{2013}]{DBLP:journals/tods/CateDK13}
{\SortNoop{Cate}}ten~Cate, B.; Dalmau, V.; and Kolaitis, P.~G.
\newblock 2013.
\newblock Learning schema mappings.
\newblock {\em {ACM} Trans. Database Syst.} 38(4):28:1--28:31.

\bibitem[\protect\citeauthoryear{Davis, Schwarz, and
  Terzi}{2009}]{DBLP:conf/sdm/DavisST09}
Davis, W.~L.; Schwarz, P.~M.; and Terzi, E.
\newblock 2009.
\newblock Finding representative association rules from large rule collections.
\newblock In {\em Proceedings of the {SIAM} International Conference on Data
  Mining, {SDM} 2009, April 30 - May 2, 2009, Sparks, Nevada, {USA}},
  521--532.
\newblock {SIAM}.

\bibitem[\protect\citeauthoryear{de Amo \bgroup et al\mbox.\egroup
  }{2015}]{DBLP:journals/is/AmoDDGLS15}
de~Amo, S.; Diallo, M.~S.; Diop, C.~T.; Giacometti, A.; Li, D.~H.; and Soulet,
  A.
\newblock 2015.
\newblock Contextual preference mining for user profile construction.
\newblock {\em Inf. Syst.} 49:182--199.

\bibitem[\protect\citeauthoryear{Fagin \bgroup et al\mbox.\egroup
  }{2005}]{FAGIN200589}
Fagin, R.; Kolaitis, P.~G.; Miller, R.~J.; and Popa, L.
\newblock 2005.
\newblock Data exchange: semantics and query answering.
\newblock {\em Theoretical Computer Science} 336(1):89 -- 124.
\newblock Database Theory.

\bibitem[\protect\citeauthoryear{Garey and
  Johnson}{1979}]{DBLP:books/fm/GareyJ79}
Garey, M.~R., and Johnson, D.~S.
\newblock 1979.
\newblock {\em Computers and Intractability: {A} Guide to the Theory of
  NP-Completeness}.
\newblock W. H. Freeman.

\bibitem[\protect\citeauthoryear{Gottlob and
  Senellart}{2010}]{DBLP:journals/jacm/GottlobS10}
Gottlob, G., and Senellart, P.
\newblock 2010.
\newblock Schema mapping discovery from data instances.
\newblock {\em J. {ACM}} 57(2):6:1--6:37.

\bibitem[\protect\citeauthoryear{Karp}{1972}]{Kar72}
Karp, R.
\newblock 1972.
\newblock Reducibility among combinatorial problems.
\newblock In Miller, R., and Thatcher, J., eds., {\em Complexity of Computer
  Computations}. Plenum Press.
\newblock  85--103.

\bibitem[\protect\citeauthoryear{Kimmig \bgroup et al\mbox.\egroup
  }{2017}]{Kimmig17}
Kimmig, A.; Memory, A.; Miller, R.~J.; and Getoor, L.
\newblock 2017.
\newblock A collective, probabilistic approach to schema mapping.
\newblock In {\em 2017 IEEE 33rd International Conference on Data Engineering
  (ICDE)},  921--932.

\bibitem[\protect\citeauthoryear{Legriel \bgroup et al\mbox.\egroup
  }{2010}]{DBLP:conf/tacas/LegrielGCM10}
Legriel, J.; Guernic, C.~L.; Cotton, S.; and Maler, O.
\newblock 2010.
\newblock Approximating the pareto front of multi-criteria optimization
  problems.
\newblock In Esparza, J., and Majumdar, R., eds., {\em Tools and Algorithms for
  the Construction and Analysis of Systems, 16th International Conference,
  {TACAS} 2010, Proceedings}, volume 6015 of {\em Lecture Notes in Computer
  Science},  69--83.
\newblock Springer.

\bibitem[\protect\citeauthoryear{Lenzerini}{2002}]{DBLP:conf/pods/Lenzerini02}
Lenzerini, M.
\newblock 2002.
\newblock Data integration: {A} theoretical perspective.
\newblock In Popa, L.; Abiteboul, S.; and Kolaitis, P.~G., eds., {\em
  Proceedings of the Twenty-first {ACM} {SIGACT-SIGMOD-SIGART} Symposium on
  Principles of Database Systems, June 3-5, Madison, Wisconsin, {USA}},
  233--246.
\newblock {ACM}.

\bibitem[\protect\citeauthoryear{Miettinen}{2008}]{MIETTINEN2008219}
Miettinen, P.
\newblock 2008.
\newblock On the positive-negative partial set cover problem.
\newblock {\em Information Processing Letters} 108(4):219 -- 221.

\bibitem[\protect\citeauthoryear{Papadimitriou and
  Yannakakis}{1982}]{DBLP:conf/stoc/PapadimitriouY82}
Papadimitriou, C.~H., and Yannakakis, M.
\newblock 1982.
\newblock The complexity of facets (and some facets of complexity).
\newblock In Lewis, H.~R.; Simons, B.~B.; Burkhard, W.~A.; and Landweber,
  L.~H., eds., {\em Proceedings of the 14th Annual {ACM} Symposium on Theory of
  Computing, May 5-7, 1982, San Francisco, California, {USA}},  255--260.
\newblock {ACM}.

\bibitem[\protect\citeauthoryear{Papadimitriou and
  Yannakakis}{1991}]{PAPADIMITRIOU1991}
Papadimitriou, C.~H., and Yannakakis, M.
\newblock 1991.
\newblock Optimization, approximation, and complexity classes.
\newblock {\em Journal of Computer and System Sciences} 43(3):425 -- 440.

\bibitem[\protect\citeauthoryear{Peleg}{2007}]{PELEG200755}
Peleg, D.
\newblock 2007.
\newblock Approximation algorithms for the label-covermax and red-blue set
  cover problems.
\newblock {\em Journal of Discrete Algorithms} 5(1):55 -- 64.

\bibitem[\protect\citeauthoryear{Qian, Popa, and Sen}{2017}]{Qian2017}
Qian, K.; Popa, L.; and Sen, P.
\newblock 2017.
\newblock Active learning for large-scale entity resolution.
\newblock In {\em Proceedings of the 2017 ACM on Conference on Information and
  Knowledge Management}, CIKM '17,  1379--1388.
\newblock New York, NY, USA: ACM.

\bibitem[\protect\citeauthoryear{Sarma \bgroup et al\mbox.\egroup
  }{2010}]{DBLP:conf/icdt/SarmaPGW10}
Sarma, A.~D.; Parameswaran, A.~G.; Garcia{-}Molina, H.; and Widom, J.
\newblock 2010.
\newblock Synthesizing view definitions from data.
\newblock In Segoufin, L., ed., {\em Database Theory - {ICDT} 2010, 13th
  International Conference, Proceedings}, {ACM} International Conference
  Proceeding Series,  89--103.
\newblock {ACM}.

\bibitem[\protect\citeauthoryear{Singla and Domingos}{2006}]{SinglaDomingos06}
Singla, P., and Domingos, P.
\newblock 2006.
\newblock {Entity Resolution with Markov Logic}.
\newblock In {\em ICDM},  572--582.

\end{thebibliography}
\bibliographystyle{aaai}
\end{document}